\newcommand{\cmark}{\ding{51}}
\newcommand{\xmark}{\ding{55}}
\def \Hf {\Gamma}
\def \zero {\mathbf{0}}
\def \rv {{\bm r}}
\def \xv {{\bm x}}
\def \yv {{\bm y}}
\def \thetav {{\bm \theta}}
\def \Hv {{\bm H}}
\def \Dv {{\bm D}}
\newcommand{\argmin}{\mathop{\mathrm{argmin}}}
\newtheorem{thm}{Theorem}
\newtheorem{definition}[thm]{Definition}
\ificcvfinal\pagestyle{empty}\fi
\pgfplotsset{compat=1.18}
\begin{document}

\title{Understanding Hessian Alignment for Domain Generalization}

\author{Sobhan Hemati$^*$ \qquad Guojun Zhang$^*$ \qquad Amir Estiri  \qquad Xi Chen\\
Huawei Noah’s Ark Lab \\
{\small \texttt{\{sobhan.hemati, guojun.zhang, amir.hossein.estiri1, xi.chen4\}@huawei.com}}
}

\maketitle
\def\thefootnote{*}\footnotetext{Equal contribution. ICCV 2023 camera ready. }
\ificcvfinal\thispagestyle{empty}\fi

\begin{abstract}

Out-of-distribution (OOD) generalization is a critical ability for deep learning models in many real-world scenarios including healthcare and autonomous vehicles. Recently, different techniques have been proposed to improve OOD generalization. Among these methods, gradient-based regularizers have shown promising performance compared with other competitors. Despite this success, our understanding of the role of Hessian and gradient alignment in domain generalization is still limited. To address this shortcoming, we analyze the role of the classifier's head Hessian matrix and gradient in domain generalization using recent OOD theory of transferability. Theoretically, we show that spectral norm between the classifier's head Hessian matrices across domains is an upper bound of the transfer measure, a notion of distance between target and source domains. Furthermore, we analyze all the attributes that get aligned when we encourage similarity between Hessians and gradients. Our analysis explains  the success of many regularizers like CORAL, IRM, V-REx, Fish, IGA, and Fishr as they regularize part of the classifier's head Hessian and/or gradient. Finally, we propose two simple yet effective methods to match the classifier's head Hessians and gradients in an efficient way, based on the Hessian Gradient Product (HGP) and Hutchinson's method (Hutchinson), and without directly calculating Hessians. We validate the OOD generalization ability of proposed methods in different scenarios, including transferability, severe correlation shift, label shift and diversity shift. Our results show that Hessian alignment methods achieve promising performance on various OOD benchmarks. The code is available \href{https://github.com/huawei-noah/Federated-Learning/tree/main/HessianAlignment}{\color{blue}here}.

\end{abstract}

\begin{figure*}[h]
\centering
\centering\includegraphics[width=0.99\textwidth]{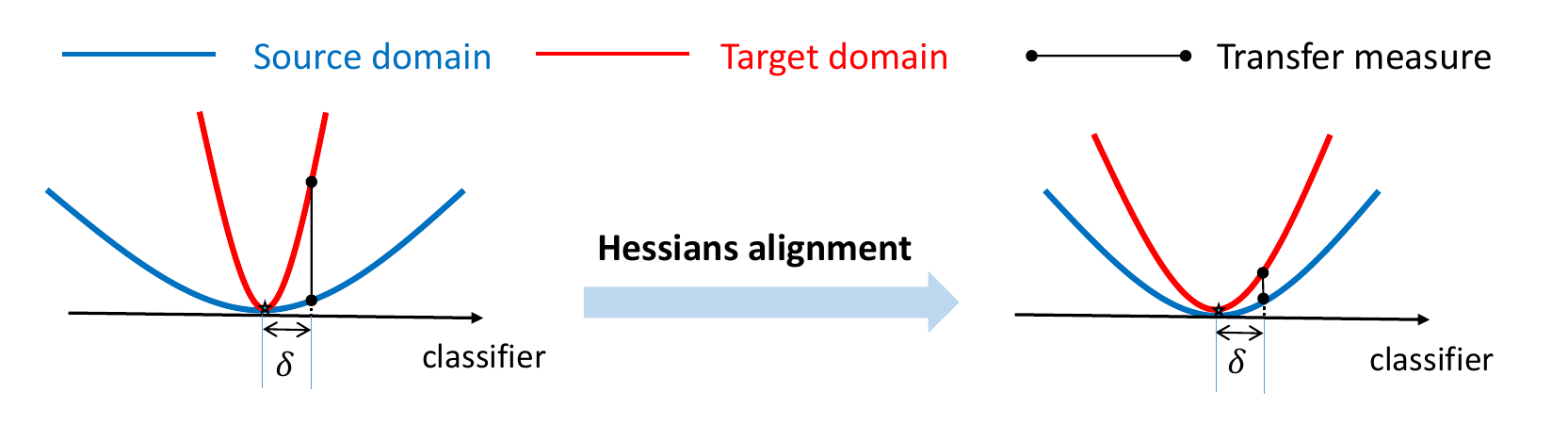}
\caption{Visualization of Hessian alignment for domain generalization. $\delta$ denotes the small deviation from an optimum. Hessian alignment matches the curvature and thus improves the transfer measure between source and target domains. }
\label{fig:intuition}
\end{figure*}

\section{Introduction}

A typical assumption in the design of current supervised deep learning algorithms is identical distributions of test and training data. Unfortunately, in real-world problems, this assumption is not always true \citep{koyama2020out} and the distribution gap between test and training data degrades the performance of deep learning models \citep{wang2022generalizing}. In practice, not only is there a distribution shift between test and training data, but also the training data does not follow the i.i.d.~assumption but contains data from multiple domains. Histopathology medical datasets are a concrete example where each hospital is using different types of staining procedures and/or scanners \citep{chang2021stain}. In such a scenario, a deep learning model often fails to generalize its knowledge to unseen domains. \citet{arjovsky2019invariant} argue that the main reason behind poor OOD generalization
is the tendency of deep learning models to capture simple spurious correlations instead of real causal information. In order to improve OOD generalization, the learning algorithm has to learn from invariant mechanisms.

The problem setup in domain generalization (DG) assumes the training set contains data from multiple sources (domains) where the task remains the same across different data sources \citep{muandet2013domain}. Furthermore, we assume there is a causal mechanism that remains invariant across different domains and hopefully for OOD data.  Although many learning algorithms have been proposed to capture invariance, a recent work by \citet{gulrajani2020search}~shows that given a standard experimental setting, the classic Empirical Risk Minimization (ERM) algorithm \citep{vapnik1999overview}, which minimizes the average of losses across different training domains, outperforms many recently proposed domain generalization methods. This suggests that many current algorithms may not be successful in capturing the invariance in data.

Out of the effective DG algorithms, one recent and promising line of research is gradient-based methods which try to capture invariance in gradient space \citep{parascandolo2020learning,koyama2020out,shi2021gradient,rame2022fishr}. One fast and simple explanation for this success can be derived from the seminal work by \citet{jaakkola1998exploiting}. According to this work, given a generative model, the gradient with respect to each parameter quantifies the role of that parameter in the generation of a data point. With this intuition, encouraging the similarity between gradients across environments can be translated as enforcing the network parameters to contribute the same for different environments which captures a notion of invariance. Beyond gradient matching, the Invariant Learning Consistency (ILC) measure \citep{parascandolo2020learning} motivates matching Hessians across different environments after convergence, given that we found the optimal solution for all environments and that the Hessian matrix is diagonal. Although the efforts in \citet{parascandolo2020learning,koyama2020out,rame2022fishr} improve our understanding of the role of Hessians and gradients to some extent, the ILC measure is built based on heuristics and is only valid under restricted assumptions like diagonal Hessian matrices. Moreover, it is not exactly clear what attributes are getting aligned when we match gradients or even Hessians. Finally, the proposed methods to align gradients or Hessians (e.g., ANDmask, \citealt{parascandolo2020learning}) seem to be suboptimal as discussed by follow-up works \citep{shahtalebi2021sand, rame2022fishr}.

To address these limitations, in this paper, we study the role of Hessians and gradients in domain generalization. Since Hessians for the whole neural networks are hard to compute, we focus on Hessians of the classifier head, which contains more information than one usually expects. 
We summarize our contributions as follows:
\begin{itemize}
\item To justify Hessian matching, we utilize a recent concept from statistical learning theory, called transfer measure \citep{zhang2021quantifying}, which describes transferability between target and source domains. Unlike ILC, transfer measure avoids the restrictive assumptions of diagonal Hessians. We theoretically show that the distance between the classifier's head Hessians is an upper bound of the transfer measure. 

\item Furthermore, we show that  Hessians and gradient alignment can be treated as feature matching. Our analysis of feature matching compares other DG algorithms like CORAL \citep{sun2016deep} and V-REx \citep{krueger2021out} in a unified framework. Especially, the success of CORAL can be attributed to approximate Hessian alignment using our analysis.

 \item Last but not least, to match Hessians efficiently, we propose two simple yet effective methods, based on different estimation of the Hessian. To our knowledge, these methods are the first DG algorithms based on Hessian estimation that align Hessians and gradients simultaneously.
\end{itemize}

Figure~\ref{fig:intuition} provides an intuitive explanation why Hessian alignment can reduce domain shift through minimizing transferability. With Hessian matching, the transfer measure, or the excess risk gap between source and target domains, is minimized. This makes any near-optimal source classifier to be also near-optimal on the target, and thus improves the transferability and OOD generalization.

\section{Problem Setting and Related Work}

Consider a deep neural network that consists of a feature extractor $g$ and a classifier head $h_{\thetav}$ with parameters $\thetav$, and suppose the training data contains $n$ different source domains $\mathcal{E}^{source} = \{\Sc_1, \dots, \Sc_n\}$. The main goal in DG is to design a learning algorithm where the trained model can generalize to an unseen target domain $\Tc$. Denoting $\Hc$ as the hypothesis class, the classification loss of a classifier $h_{\thetav} \in \Hc$ on a domain $\Dc$ is:
\begin{align}\label{eq:1}
\mathcal{L}_{\Dc}(\thetav) = \Eb_{(\xv, \yv)\sim \Dc} [\ell(\hat{\yv}, \yv;\thetav)],
\end{align}
where $\xv$ and $\yv$ are the input and the associated one-hot label, $\hat{\yv}$ is the logit (classifier output) and $\hat{\yv} = h_{\thetav}(g(\xv))$ where $g(\xv)$ is the output of a feature extractor, $h_{\thetav}$ is the classifier and $\ell(\hat{\yv}, \yv;\thetav)$ is the cross entropy between $\hat{\yv}$ and $\yv$. The classic baseline for OOD generalization is Empirical Risk Minimization \citep[ERM,][]{vapnik1999overview} which minimizes the average of losses across different training environments namely $\frac{1}{n} \sum_e \mathcal{L}_{\Sc_e}$. However, it has been shown that this approach might fail to generalize to out-of-distribution data when there is an easy-to-learn spurious correlation in data \citep{arjovsky2019invariant}. To tackle this, many algorithms are proposed to avoid domain-specific representations and capture invariance of data from different perspectives. We categorize the most related into four main classes:

\textbf{Data augmentation.}
Recently many papers have shown that data augmentation can improve OOD generalization. \citet{gulrajani2020search} showed that the classic ERM with data augmentation can outperform many DG algorithms. \citet{zhang2018mixup} proposed \emph{mixup}, a data augmentation strategy designed for domain generalization where  samples across multiple source domains get mixed to generate new data for training. \citet{ilse2021selecting} derived an algorithm that select proper data augmentation.

\textbf{Robust optimization} techniques have been used to achieve better OOD generalization \citep{sagawa2019distributionally,hu2018does}. These techniques minimize the worst-case loss over a set of source domains, and can be helpful when we know the distance between training and test environments.

\textbf{Learning domain invariant features.} Invariant representation learning was initialized by seminal work of \citet{ben2010theory}. A wide range of notions of invariance have been proposed since then. \citet{tzeng2014deep} proposed feature matching across domains while \citet{sun2016deep} considered encouraging the similarity between feature covariance matrices. DANN \citep{ganin2016domain} proposed matching feature distributions with an adversarial network, which is followed by MMD \citep{li2018domain} and MDAN \citep{zhao2018adversarial}.  \citet{zhao2019learning} discussed the fundamental tradeoff between feature matching and optimal joint risk.
Another line work started from Invariant Risk Minimization (IRM) \citep{arjovsky2019invariant}, which proposed to encourage the classifier head to be optimal for different domains simultaneously. This idea led to a new notion of invariance as followed by e.g., Risk Extrapolation (V-REx) \citep{krueger2021out} where authors proposed similarity between losses across training domains. 
 
\textbf{Gradient matching based algorithms.} Recently, gradient alignment has been used for domain generalization \citep{parascandolo2020learning,shahtalebi2021sand,koyama2020out,Mansilla_2021_ICCV,shi2021gradient,rame2022fishr} with good performance. In this regard, \citet{shi2021gradient} proposed Fish to maximize the inner product of average gradients for different domains. In a similar approach, \citet{koyama2020out} proposed the Inter Gradient Alignment (IGA) algorithm that minimizes the variance of average gradients over environments. \citet{parascandolo2020learning} developed ANDmask algorithm where the authors proposed ILC, which measures the consistency of local minima for different environments. \citet{parascandolo2020learning} proposed ANDmask to reduce the speed of convergence in directions where landscapes have different curvatures. Inspired by ILC measure \citep{parascandolo2020learning}, 
\citet{rame2022fishr} proposed Fishr as a better alternative compared with ANDmask to reduce inconsistency. In Fishr, the variance of domain level gradients is minimized.

\section{Theory of Hessian Matching}
\noindent In this section, we study the role of Hessians and gradients in domain generalization from two views: transferability and invariant representation learning.

\subsection{Hessian alignment minimizes transfer measure}\label{sec:alignment}
\noindent We motivate our work through the lens of transferability and transfer measure introduced in \citet{zhang2021quantifying}. By their definition, domain $\Sc$ is transferable to domain $\Tc$ on a hypothesis class $\Hc$, if given any near-optimal classifier $h_{\thetav}$ on $\Sc$, it also achieves near-optimal performance on $\Tc$. More precisely, we have:

\begin{definition}[\textbf{transferability, \citealt{zhang2021quantifying}}]\label{def:transfer}
 
$\Sc$ is $(\delta_\Sc, \delta_\Tc)_{\Hc}$-transferable to $\Tc$ if for $\delta_\Sc > 0$, there exists $\delta_\Tc > 0$ such that ${\argmin} (\mathcal{L}_\Sc, \delta_\Sc)_\Hc \subseteq {\argmin} (\mathcal{L}_\Tc, \delta_\Tc)_\Hc$, where:
\begin{align}
{\argmin} (\mathcal{L}_\Dc, \delta_\Dc)_\Hc &:= \{h_{\thetav}\in \Hc: \mathcal{L}_\Dc(\thetav) \nonumber \\
&\leq \inf_{h_{\thetav}\in \Hc}\mathcal{L}_\Dc(\thetav) + \delta_\Dc\}, \Dc \in \{\Sc, \Tc\}. \nonumber
\end{align}
\end{definition}

\noindent The set ${\argmin} (\Lc_\Dc, \delta_\Dc)_\Hc$ is called  a \emph{$\delta$-minimal set} \citep{koltchinskii2010rademacher} of $\mathcal{L}_\Dc$, and represents the near-optimal set of classifiers. With the $\delta$-minimal set, we can define the transfer measure:

\begin{definition}[\textbf{transfer measure, \citealt{zhang2021quantifying}}]\label{def:transfer_measure}
Given $\Hf = \argmin(\Lc_\Sc, \delta_\Sc)_{\Hc}$, $\Lc_\Sc^* := \inf_{h_{\thetav}\in \Hf} \mathcal{L}_\Sc(\thetav)$ and $\mathcal{L}_\Tc^* := \inf_{h_{\thetav}\in \Hf} \mathcal{L}_\Tc(\thetav)$ we define the transfer measure $\mathtt{T}_{\Hf}(\Sc\|\Tc)$ as:
\begin{align}\label{eq:2}
&\mathtt{T}_{\Hf}(\Sc\|\Tc) := \sup_{h_{\thetav}\in \Hf} \mathcal{L}_\Tc(\thetav) - \mathcal{L}_\Tc^* - (\mathcal{L}_\Sc(\thetav) - \mathcal{L}_\Sc^*).
\end{align}
\end{definition}
\noindent The transfer measure in Eq.~\ref{eq:2} measures the upper bound of the difference between source domain excess risk ($\mathcal{L}_\Sc(\thetav) - \mathcal{L}_\Sc^*$) and the target domain excess risk ($\mathcal{L}_\Tc(\thetav) - \mathcal{L}_\Tc^*$), on a neighborhood $\Gamma$. Usually, we choose $\Gamma$ to be a neighborhood of our learned near-optimal source classifier.

Now, we state our main theorem which, under mild assumptions, implies that Hessian alignment effectively reduces the transfer measure and improving transferability.

\begin{thm}[\textbf{Hessian alignment}]\label{thm:Hessian_distance_spectral_norm}
Suppose both the source and target domain losses $\Lc_\Sc$ and $\Lc_\Tc$ are $\mu$-strongly convex with respect to the classifier head and twice differentiable with the same minimizer, i.e., $\argmin \Lc_\Sc = \argmin \Lc_\Tc$. Then with $\delta = 2\delta_\Sc/\mu$ and $\Hf = \argmin(\Lc_\Sc, \delta_\Sc)_{\Hc}$ we have:
\begin{align}
\mathtt{T}_{\Hf}(\Sc\|\Tc) \leq \ \frac{1}{2}\delta^2 \|\Hv_\Tc  - \Hv_\Sc\|_2 + o(\delta^2).
\end{align}
\end{thm}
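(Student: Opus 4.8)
The plan is to reduce the transfer measure to a second-order Taylor comparison of the two losses around their shared minimizer, and then to control the size of the near-optimal set $\Hf$ using strong convexity. Let $\thetav^*$ be the common minimizer guaranteed by $\argmin \Lc_\Sc = \argmin \Lc_\Tc$, and write $\Hv_\Sc, \Hv_\Tc$ for the Hessians evaluated at $\thetav^*$. First I would observe that $\thetav^* \in \Hf = \argmin(\Lc_\Sc, \delta_\Sc)_{\Hc}$, since it attains the global minimum of $\Lc_\Sc$ and hence trivially lies in the $\delta_\Sc$-minimal set. Consequently $\Lc_\Sc^* = \inf_{h_\thetav \in \Hf}\Lc_\Sc(\thetav) = \Lc_\Sc(\thetav^*)$, and because $\thetav^*$ is \emph{also} the global minimizer of $\Lc_\Tc$, we get $\Lc_\Tc^* = \Lc_\Tc(\thetav^*)$. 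This lets me rewrite both excess risks inside $\OneTransfer$ as deviations from the single base point $\thetav^*$.

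Next, since both losses are twice differentiable with vanishing gradient at the interior minimizer $\thetav^*$, I would Taylor-expand each loss:
\begin{align}
\Lc_\Dc(\thetav) - \Lc_\Dc^* = \tfrac12 (\thetav-\thetav^*)^\top \Hv_\Dc (\thetav-\thetav^*) + o(\|\thetav-\thetav^*\|^2), \quad \Dc \in \{\Sc, \Tc\}. \nonumber
\end{align}
Subtracting the source expansion from the target one, the constant and linear contributions cancel and the quantity inside the supremum becomes $\tfrac12 (\thetav-\thetav^*)^\top (\Hv_\Tc - \Hv_\Sc)(\thetav-\thetav^*) + o(\|\thetav-\thetav^*\|^2)$. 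Bounding this quadratic form by the spectral norm, $|(\thetav-\thetav^*)^\top (\Hv_\Tc-\Hv_\Sc)(\thetav-\thetav^*)| \le \|\Hv_\Tc-\Hv_\Sc\|_2\,\|\thetav-\thetav^*\|^2$, already produces the factor $\tfrac12\|\Hv_\Tc-\Hv_\Sc\|_2\,\|\thetav-\thetav^*\|^2$ exhibiting the structure of the claim.

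It then remains to control $\|\thetav-\thetav^*\|$ uniformly over $\Hf$. Here I would use $\mu$-strong convexity of $\Lc_\Sc$: for any $h_\thetav \in \Hf$ we have $\tfrac{\mu}{2}\|\thetav-\thetav^*\|^2 \le \Lc_\Sc(\thetav) - \Lc_\Sc^* \le \delta_\Sc$, so $\|\thetav-\thetav^*\|^2 \le 2\delta_\Sc/\mu$; that is, every near-optimal source classifier lies in a ball of the deviation radius $\delta$ about $\thetav^*$, which is exactly the quantity entering as $\delta^2$ in the bound. Substituting this radius for $\|\thetav-\thetav^*\|^2$ and passing to the supremum over $\Hf$ then yields $\OneTransfer \le \tfrac12 \delta^2 \|\Hv_\Tc - \Hv_\Sc\|_2 + o(\delta^2)$.

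The main obstacle I anticipate is not the algebra but the \emph{uniformity} of the remainder: the $o(\|\thetav-\thetav^*\|^2)$ term must be shown to be $o(\delta^2)$ uniformly across the whole set $\Hf$ before it can legitimately be pulled outside the supremum. I would handle this by using Taylor's theorem in mean-value form, $\Lc_\Dc(\thetav)-\Lc_\Dc^* = \tfrac12(\thetav-\thetav^*)^\top \nabla^2\Lc_\Dc(\xi)(\thetav-\thetav^*)$ at an intermediate point $\xi$, and invoking continuity of the Hessians together with the fact that $\Hf$ collapses onto $\thetav^*$ as $\delta_\Sc \to 0$ (radius $O(\delta)$) to bound $\|\nabla^2\Lc_\Dc(\xi) - \Hv_\Dc\|_2$ uniformly by a quantity vanishing with $\delta$. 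A secondary bookkeeping point is simply to confirm that the scale of the deviation ball is the one fixed by the strong-convexity step, i.e.\ $\|\thetav-\thetav^*\|^2 \le 2\delta_\Sc/\mu$, which is what determines the appearance of $\delta^2$ in the final estimate.
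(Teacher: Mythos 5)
Your proposal is correct and follows essentially the same route as the paper's proof: use $\mu$-strong convexity to trap the $\delta_\Sc$-minimal set $\Hf$ in a ball of radius $\delta$ around the common minimizer $\thetav^*$, Taylor-expand both losses at $\thetav^*$ (where the gradients vanish), subtract so that only the quadratic form in $\Hv_\Tc - \Hv_\Sc$ survives, and bound it by the spectral norm. Your added remark on the uniformity of the $o(\|\thetav-\thetav^*\|^2)$ remainder over $\Hf$ (via the mean-value form of Taylor's theorem and Hessian continuity) is a careful refinement the paper leaves implicit, but it does not change the argument.
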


\begin{proof}
Let ${\thetav}^*$ represent the optimal classifier for both source and target domains. Given that $ \mathcal{L}_\Sc(\thetav)$ is strongly convex, we can write $\inf_{h_{\thetav}\in \Hc}\mathcal{L}_\Sc(\thetav)= \mathcal{L}_\Sc(\thetav^*)$ and subsequently $\Hf$ can be written as
\begin{align}
{\argmin} (\mathcal{L}_\Sc, \delta_\Sc)_\Hc := \{h_{\thetav}\in \Hc: \mathcal{L}_\Sc({\thetav}) - \mathcal{L}_\Sc(\thetav^*) \leq \delta_\Sc\}. \nonumber 
\end{align}
Now, we define set $\Fc_1$ as
\begin{align}
\label{eq:3}
\Fc_1= \{\thetav: \mathcal{L}_\Sc(\thetav) - \mathcal{L}_\Sc(\thetav^*) \leq \delta_\Sc\}. 
\end{align}

On the other hand, from the $\mu$-strong convexity of $\Lc_\Sc(\thetav)$, we can write
\begin{align}\label{eq:4}
\mathcal{L}_\Sc(\thetav) \geq \Lc_\Sc(\thetav^{*}) +  \nabla_{\thetav} \mathcal{L}_\Sc^{\top}(\thetav^{*}) (\thetav- \thetav^{*}) + \frac{\mu}{2} \|\thetav-\thetav^*\|_{2}^{2}.
\end{align}
Given the optimality of $\thetav^{*}$ (and thus $\nabla_{\thetav} \mathcal{L}_\Sc(\thetav^{*})=\zero$), from Eqs.~\ref{eq:3} and \ref{eq:4} we obtain that for any $\thetav \in \Fc_1$, 
\begin{align}\label{eq:5}
&  \frac{\mu}{2} \|\thetav-\thetav^*\|_{2}^{2} \leq \mathcal{L}_\Sc(\thetav) - \Lc_\Sc(\thetav^{*}) \leq \delta_\Sc.
\end{align}
If we define set $\Fc_2$ as
\begin{align}
\label{eq:6}
\Fc_2= \{\thetav: \frac{\mu}{2} \|\thetav-\thetav^*\|_{2}^{2}  \leq \delta_\Sc\}. 
\end{align}
Then \eqref{eq:5} implies $\Fc_1 \subset \Fc_2$ and as a result, an upper bound of the transfer measure $\mathtt{T}_{\Hf}(\Sc\|\Tc)$ can be written as
\begin{align}\label{eq:7}
&\sup_{\|\thetav - \thetav^*\|_2 \leq \delta}\mathcal{L}_\Tc(\thetav) - \Lc_\Tc(\thetav^{*}) - (\mathcal{L}_\Sc(\thetav) - \Lc_\Sc(\thetav^{*})),
\end{align}
with $\delta = 2\delta_\Sc/\mu$. Now, we may write the second-order Taylor expansion $\mathcal{L}_\Sc$ around $\thetav^{*}$ as:
\begin{align}
\begin{split}
\label{eq:8}
\mathcal{L}_\Sc(\thetav) & =  \Lc_\Sc(\thetav^{*}) + (\thetav-\thetav^{*})^{\top} \nabla_{\thetav} \mathcal{L}_\Sc(\thetav^{*}) + \\
&+ \frac{1}{2} (\thetav-\thetav^{*})^{\top} \Hv_\Sc (\thetav-\thetav^{*}) + o(\|\thetav - \thetav^*\|^2),
\end{split}
\end{align}
where given $\nabla_{\thetav} \mathcal{L}_\Sc(\thetav^{*})= \zero$, the source domain excess risk is 
\begin{align}\label{eq:9}
& \mathcal{L}_\Sc(\thetav) - \Lc_\Sc(\thetav^{*}) =  \frac{1}{2} \Delta\thetav^{\top} \Hv_\Sc \Delta\thetav + o(\|\thetav - \thetav^*\|^2),
\end{align}
with $\Delta \thetav = \thetav - \thetav^*$.
Following a similar path for the target domain and considering that $\thetav^*$ is the optimal solution for both source and target domains, the transfer measure $\mathtt{T}_{\Hf}(\Sc\| \Tc)$ is upper bounded by
\begin{align}\label{eq:10}
\sup_{\|\thetav - \thetav^*\|_2 \leq \delta} \frac{1}{2} (\thetav-\thetav^{*})^{\top}  (\Hv_\Tc  - \Hv_\Sc) (\thetav-\thetav^{*}) +  o(\delta^2),
\end{align}
which, from the definition of spectral norm, results in 
\begin{align}\label{eq:11}
&\mathtt{T}_{\Hf}(\Sc\|\Tc) \leq \ \frac{1}{2}\delta^2 \|\Hv_\Tc  - \Hv_\Sc\|_2 + o(\delta^2),
\end{align}
and the proof is complete.
\end{proof}

Note that convexity and differentiability are easily satisfied if we use the standard linear classifier head and cross entropy loss with softmax. Since in practice we do not have access to the target domain, we minimize the distance between available source domains. For simplicity, we replace the spectral norm with the Frobenius norm (note that Frobenius norm is an upper bound, i.e., $\| \cdot \|_2 \leq \|\cdot \|_F $). As opposed to pairwise regularizers which grow quadratically with the number of environments, we minimize the variance of Hessian distances across domains, namely $$\frac{1}{n} \sum_{e=1}^n \| \Hv_{\Sc_e}  - \overline{\Hv_{\Sc}} \|_2^2\mbox{ where } \overline{\Hv_{\Sc}}=\frac{1}{n} \sum_{e=1}^n \Hv_{\Sc_e}$$ and $\Hv_{\Sc_e}$ is the Hessian matrix for the $e$-th source domain. In this way, computation is linear with respect to the number of environments.

\begin{table*}[]
 \caption{ 
    The alignment attributes for different Domain Generalization algorithms where  Loss is $(y_p - \hat{y}_p)^2$, Feature $z_q$,  Covariance $z_q z_v$,  Error \ $y_p - \hat{y}_p$, Error $\times$ Feature \ $(\hat{y}_p - y_p) z_q$,  Logit \  $\hat{y}_u (\delta_{u,v}-\hat{y}_v)$, Logit $\times$ Feature \ $z_q \hat{y}_p (\delta_{p,u}-\hat{y}_u)$,  and Logit $\times$  Covariance is $\hat{y}_p z_q z_v (\delta_{p,u}-\hat{y}_u)$. Gradient alignment method matches Error and Error $\times$ Feature while matching Hessians would align Logit, Logit $\times$ Feature, Logit $\times$ Covariance. $^1$To be precise, the Loss that get aligned in V-Rex is negative log likelihood instead of squared error.
    $^2$To be precise, Fishr aligns Loss $\times$ Feature.} 
      \label{table:Table1}
        \centering
     \scalebox{0.9}{
        \begin{tabular}{l c c c c c c c c}
            \toprule Alignment attribute    &  Loss  &Feature & Covariance &Error & Error $\times$ Feature  & Logit &Logit $\times$ Feature      & Logit $\times$ Covariance                \\
            \midrule
            V-Rex
             &\cmark$^1$ &\xmark   &\xmark &\xmark
             &\xmark
             &\xmark  
             &\xmark
            &\xmark\\ 
             \midrule
            CORAL  &\xmark &\cmark   &\cmark &\xmark
             &\xmark
             &\xmark  
             &\xmark
            &\xmark\\
             \midrule

             IGA
            &\xmark &\xmark   &\xmark    &\cmark
             &\cmark
             &\xmark  
             &\xmark
            &\xmark\\
             \midrule
            Fish
            &\xmark &\xmark   &\xmark    &\cmark
             &\cmark
             &\xmark  
             &\xmark
            &\xmark\\
             \midrule
            Fishr
            &\cmark &\xmark   &\xmark    &\xmark
             &\cmark$^2$
             &\xmark  
             &\xmark
            &\xmark\\
                      \midrule
            Hessian Alignment
             &\xmark &\xmark   &\xmark   &\cmark
             &\cmark
             &\cmark  
             &\cmark
            &\cmark\\
            \bottomrule
        \end{tabular}}
\end{table*}

\subsection{Hessian and gradient alignment is feature matching}

\noindent In \S \ref{sec:alignment} we assumed that the minimizers of source and target are the same. In order to align them, gradient alignment is often necessary.
To get more insight into the attributes that get matched while the Hessians and gradients get aligned, in Proposition \ref{prop:1}, we study the Hessian and gradient structures with respect to the classifier head parameters.

\begin{restatable}[\textbf{Alignment attributes in Hessian and gradient}]{prop}{Alignment}\label{prop:1} 
Let $\hat{y}_{p}$ and $y_{p}$ be the network prediction and true target with the $p$-th class, $z_i$ be the $i$-th feature before the classifier. Denote the classifier's parameters as $w_{k,i}$, the element in row $k$ and column $i$ of the classifier weight matrix, and $b_k$ as the bias term for the $k$-th neuron. Matching the gradients and Hessians w.r.t.~the classifier head across domains aligns the following:
\begin{align}
\frac{\partial \ell}{\partial b_p} &= (\hat{y}_p - y_p), \ (Error) \label{eq:12} \\
\frac{\partial \mathcal{L}}{\partial w_{p,q}} &=  (\hat{y}_p - y_p) z_q, \ (Error \times Feature) \label{eq:13} \\
\frac{\partial ^2 \ell}{\partial b_u \partial b_v} &= \hat{y}_u (\delta_{u,v}-\hat{y}_v), \ (Logit) \label{eq:14}\\
\frac{\partial ^2 \ell}{\partial w_{p,q} \partial b_u} &= z_q \hat{y}_p (\delta_{p,u}-\hat{y}_u),  \ (Logit \times Feature) \label{eq:15}\\
\frac{\partial ^2 \ell}{\partial w_{p,q} \partial w_{u,v}} &= \hat{y}_p z_q z_v (\delta_{p,u}-\hat{y}_u), \ (Logit \times  Covariance) \label{eq:16}
\end{align}
where $\delta_{p,u}$ is the Kronecker delta which is 1 if $p=u$ and 0 otherwise and $u, v, p, q, k, i$ are dummy indices.
\end{restatable}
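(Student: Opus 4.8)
The plan is to treat the classifier head as an affine map followed by a softmax and cross-entropy, and then recover every entry in the statement by the chain rule from two elementary building blocks. Concretely, I would write the logits as $a_k = \sum_i w_{k,i} z_i + b_k$, the predicted probabilities as $\hat{y}_p = e^{a_p} / \sum_j e^{a_j}$, and the loss as $\ell = -\sum_p y_p \log \hat{y}_p$ with one-hot $y$. The point that keeps all derivatives clean is that the features $z_i$ come from the (fixed) extractor $g$, so they are constants with respect to the head parameters $w_{k,i}$ and $b_k$.

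First I would establish the two facts that drive everything: (i) the softmax Jacobian $\partial \hat{y}_u / \partial a_v = \hat{y}_u(\delta_{u,v} - \hat{y}_v)$, obtained by differentiating the quotient $e^{a_u}/\sum_j e^{a_j}$ and splitting into the $u=v$ and $u\neq v$ cases; and (ii) the logit gradient $\partial \ell / \partial a_k = \hat{y}_k - y_k$, which follows from $\partial \ell/\partial a_k = -\sum_p (y_p/\hat{y}_p)(\partial \hat{y}_p/\partial a_k)$, substituting (i), and using $\sum_p y_p = 1$.

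Next I would chain these through the affine layer via $\partial a_k/\partial b_p = \delta_{k,p}$ and $\partial a_k/\partial w_{p,q} = \delta_{k,p} z_q$. The first-order identities \eqref{eq:12} and \eqref{eq:13} are then immediate: $\partial \ell/\partial b_p = \hat{y}_p - y_p$ and $\partial \ell/\partial w_{p,q} = (\hat{y}_p - y_p)z_q$. For the second-order identities \eqref{eq:14}--\eqref{eq:16}, I would differentiate these expressions once more; since $y_k$ and $z_q$ are constants, each second derivative collapses to a derivative of some $\hat{y}$, evaluated through fact (i). Differentiating $\hat{y}_v - y_v$ in $b_u$ yields the Logit term, differentiating $\hat{y}_u - y_u$ in $w_{p,q}$ inserts an extra $z_q$ for the Logit $\times$ Feature term, and differentiating $(\hat{y}_u - y_u)z_v$ in $w_{p,q}$ inserts $z_q z_v$ for the Logit $\times$ Covariance term.

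The only real subtlety is bookkeeping rather than difficulty: matching the exact index placement printed in the statement. The softmax Jacobian is symmetric in the sense that $\hat{y}_u(\delta_{u,v} - \hat{y}_v) = \hat{y}_v(\delta_{v,u} - \hat{y}_u)$ (verify the $u=v$ and $u\neq v$ cases separately), so regardless of which variable is differentiated first, the chain-rule output can be rewritten into the exact form of \eqref{eq:14}--\eqref{eq:16}. I would state this symmetry once and invoke it to align each computed derivative with the corresponding stated expression, which completes the proof.
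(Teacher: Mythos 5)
Your proposal is correct and follows essentially the same route as the paper's proof: set up the affine logits $a_k$, compute the softmax Jacobian $\partial\hat{y}_u/\partial a_v=\hat{y}_u(\delta_{u,v}-\hat{y}_v)$, obtain the first-order identities via the chain rule, and differentiate those once more (with $z_i$ constant in the head parameters) to get the Hessian entries. Your factoring through $\partial\ell/\partial a_k=\hat{y}_k-y_k$ and your explicit remark on the symmetry of the softmax Jacobian are minor organizational refinements of the same argument, not a different approach.
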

 
Here the error means the difference between the true one-hot label and our prediction. Eqs.~\ref{eq:12}--\ref{eq:16} show that matching Hessians and gradients with respect to classifier parameters will align the 
errors, features weighted by errors, logits, features weighted by logits, and covariance weighted by logits across different domains simultaneously. Although gradient alignment can be helpful before convergence, when the training is close to convergence, the gradients go to zero and aligning them is not helpful anymore. On the other hand, the attributes extracted from the Hessian matrix can remain non-zero both before and after convergence and aligning them can boost OOD performance.

Proposition \ref{prop:1} shows that matching gradients and Hessians can be seen as a generalization of other works like V-Rex \citep{krueger2021out}, CORAL \citep{sun2016deep}, Fish \citep{shi2021gradient}, IGA \citep{koyama2020out}, and Fishr \citep{rame2022fishr}, as these algorithms only partially realize the matching of loss/error, feature or covariance. We summarize alignments in different methods in Table~\ref{table:Table1} and reveal their connection to Hessian and gradient alignment. One interesting observation is that similar to Hessian alignment, CORAL also matches the feature and its covariance, which opens the venue to understand the success of CORAL with Hessian matching.
In the supplementary, we also present similar results for regression tasks.

\section{Efficient Hessian matching}

So far, we have shown that aligning Hessians across domains can reduce the transfer measure (or in other words, increase transferability) and match the representations at different levels i.e., logits, features, errors and covariances. However, given the computational complexity of calculating Hessian matrices, it is quite challenging to directly minimize $\| \Hv_{\Sc_1}  - \Hv_{\Sc_2} \|_F$. To align Hessian matrices efficiently, we propose two approaches, aligning Hessian-gradient products and matching Hessian diagonals using Hutchinson’s trace estimator \citep{bekas2007estimator}.

\subsection{Hessian Gradient Product (HGP)}
For the Hessian-gradient product matching we propose the following loss:
\begin{align} 
\label{eq:17} 
\mathcal{L}_{\rm HGP} & \!=\! \frac{1}{n} \sum_{e=1}^n \mathcal{L}_{\Sc_e} + \alpha  \|\Hv_{\Sc_e}\nabla_{\thetav}\mathcal{L}_{\Sc_e} - \overline{\Hv_{\Sc}\nabla_{\thetav}\mathcal{L}_{\Sc}}\|_{2}^{2} +  \nonumber \\
& +
\beta \|\nabla_{\thetav}\mathcal{L}_{\Sc_e} - \overline{\nabla_{\thetav}\mathcal{L}_{\Sc}}\|_{2}^{2}, 
\end{align}
where $\alpha$ and $\beta$ are regularization parameters, 
\begin{align}
\overline{\nabla_{\thetav}\mathcal{L}_{\Sc}} &= \frac1n\sum_{e=1}^n \nabla_{\thetav}\mathcal{L}_{\Sc_e}, \mbox{ and}\\
\overline{\Hv_{\Sc}\nabla_{\thetav}\mathcal{L}_{\Sc}} & = \frac1n\sum_{e=1}^n \Hv_{\Sc_e}\nabla_{\thetav}\mathcal{L}_{\Sc_e}.
\end{align}
Given that $\nabla_\thetav \mathcal{L}_{\Sc_e}$ and $\overline{\nabla_{\thetav}\mathcal{L}_{\Sc}}$ are close enough, minimizing eq.~\ref{eq:17} reduces the distance between $\Hv_{\Sc_e}$ and $\overline{\Hv_{\Sc}}$. We can efficiently compute the HGP term without directly calculating Hessians, 
using the following expression 
\begin{align}
\label{eq:18} 
\Hv_{\Sc_e}\nabla_{\thetav}\mathcal{L}_{\Sc_e} = \|\nabla_{\thetav}\mathcal{L}_{\Sc_e}\| \cdot \nabla_{\thetav} \|\nabla_{\thetav}\mathcal{L}_{\Sc_e}\|,
\end{align}
which follows from $\nabla_{\xv}\|\xv\|_2 = \xv/\|\xv\|_2$ with $\xv\in \Rb^d$ and chain rule. For domain $e$, to calculate the exact value of the Hessian-gradient product, we only need two rounds of backpropagation: one for $\nabla_{\thetav}\mathcal{L}_{\Sc_e}$ and another for $\nabla_{\thetav} \|\nabla_{\thetav}\mathcal{L}_{\Sc_e}\|$.

\subsection{Hutchinson’s method}
In order to estimate Hessians more accurately, we propose another approach to estimate the Hessian diagonal using Hutchinson’s method \citep{bekas2007estimator}:
\begin{align}
\label{eq:19} 
\Dv_{\Sc_e} = {\rm diag}(\Hv_{\Sc_e}) = \Eb_{\rv}[\rv \odot (\Hv_{\Sc_e} \cdot \rv)],
\end{align}
where $\Eb_{\rv}$ is expectation of random variable $\rv$  sampled from Rademacher distribution and $\odot$ is Hadamard product. In practice, the expectation is replaced with the mean over a set of samples. Note that $\Hv_{\Sc_e} \cdot \rv$ can be calculated efficiently by calculating  $\nabla_{\thetav} \left( \nabla_{\thetav}\mathcal{L}_{\Sc_e}^{\top} \rv\right)$.
Given that we have estimated the Hessian diagonal, the $\mathcal{L}_{\rm Hutchinson}$ loss is 
\begin{align}
\label{eq:20} 
\mathcal{L}_{\rm Hutchinson} = &\frac{1}{n} \sum_{e=1}^n \mathcal{L}_{\Sc_e} + \alpha \|\Dv_{\Sc_e} - \overline{\Dv_{\Sc}}\|_{2}^{2} + \nonumber \\
&\beta \|\nabla_{\thetav}\mathcal{L}_{\Sc_e} - \overline{\nabla_{\thetav}\mathcal{L}_{\Sc}}\|_{2}^{2}. 
\end{align}
Compared to HGP, Hutchinson's method is more expensive to compute, as it requires an estimation with sampling. In practice, we use 100 samples for each domain. The trade-off is that Hutchinson's method gives us a more accurate matching algorithm for Hessian, since after training, the Hessian matrix is often close to diagonal \citep{skorski2021modern}, whereas the gradient (and thus the Hessian-gradient product) becomes nearly zero.

\section{Experiments}

\noindent To comprehensively validate the effectiveness of HGP and Hutchinson, we evaluate the proposed algorithms from multiple views, in terms of transferability \citep{zhang2021quantifying}, OOD generalization on datasets with correlation shift (Colored MNIST), correlation shift with label shift (imbalanced Colored MNIST), and diversity shift \citep{ye2021ood}, including PACS \citep{li2017deeper} and OfficeHome \citep{venkateswara2017deep} datasets from DomainBed.

\subsection{Transferability}
\noindent To show the role of Hessian discrepancy in improving transferability (i.e, Theorem \ref{thm:Hessian_distance_spectral_norm}), we use the algorithm presented in \citet{zhang2021quantifying}, which computes the worst-case gap $\sup_{\|\thetav - \thetav^{*}\| \leq \delta} \mathcal{L}_{\Sc_i}(\thetav) - \mathcal{L}_{\Sc_j}(\thetav)$ among all pairs of $(i, j)$, to evaluate transferability among multiple domains. First, we find the domains $i$ and $j$ such that $\mathcal{L}_{\Sc_i}$ and $\mathcal{L}_{\Sc_j}$ are maximized and minimized respectively and consider the $\mathcal{L}_{\Sc_i} - \mathcal{L}_{\Sc_j}$ as the gap. Then, we run an ascent optimizer on classifier $\thetav$ to maximize gap and project $\thetav$ on to Euclidean ball $\|\thetav - \thetav^{*}\| \leq \delta$. We repeat this procedure for multiple rounds and report the test target accuracy for the worst-case gap. Figure~\ref{fig:attack} shows that OOD accuracies of both HGP and Hutchinson are robust against the attack of an ascent optimizer, which implies that minimizing $\| \Hv_{\Sc_1}  - \Hv_{\Sc_2} \|_F$ improves transferability.

\begin{figure}[ht]
  \centering
  \medskip
  \begin{subfigure}[t]{0.49\linewidth}
    \centering\includegraphics[width=4.2cm]{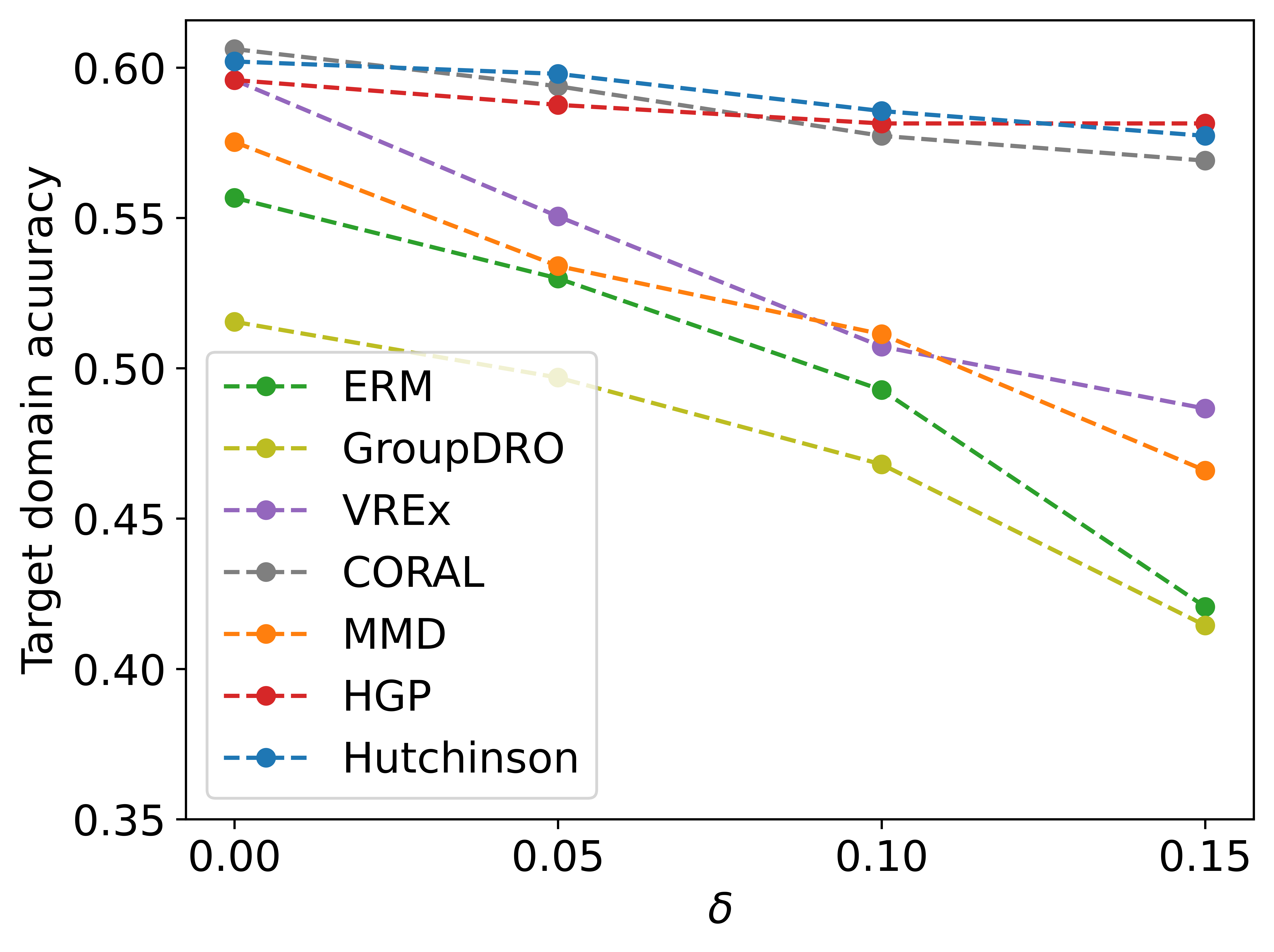}
    \caption{OfficeHome}
  \end{subfigure}
  \begin{subfigure}[t]{0.49\linewidth}
    \centering\includegraphics[width=4.2cm]{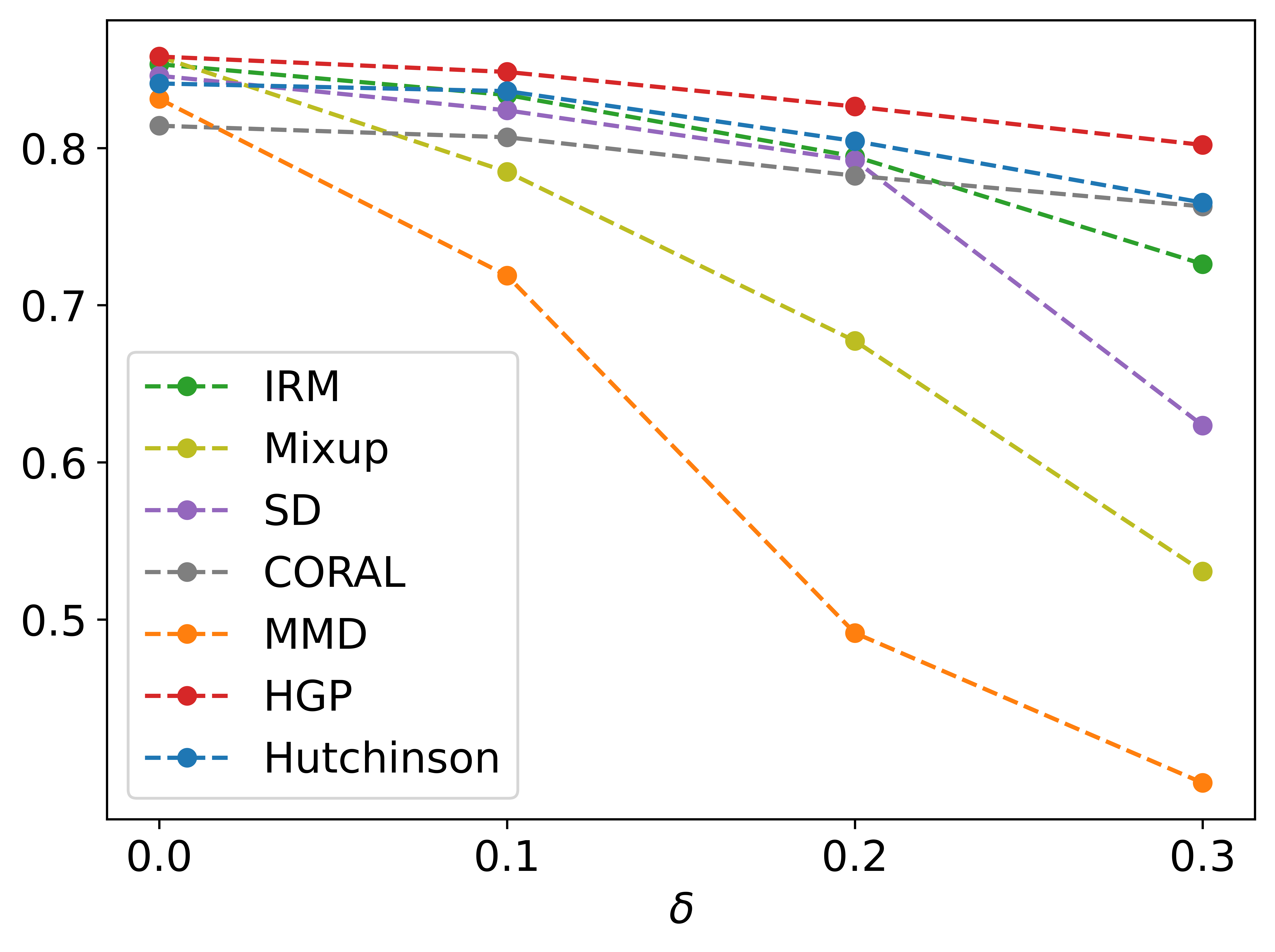}
    \caption{PACS}
    \end{subfigure}
\caption{OOD accuracy of HGP and Hutchinson compared with multiple baselines on OfficeHome and PACS datasets with different values of $\delta$ for the ascent optimizer.}
  \label{fig:attack}
\end{figure}

\subsection{OOD generalization under correlation shift using Colored MNIST}

\noindent Our Colored MNIST (CMNIST) setup is from  \citet{arjovsky2019invariant}, where the dataset contains two training and one test environments and the objective is a binary classification task in which class zero and one contain digits less than 5 and greater and equal 5 respectively. Further, the labels have been flipped with a probability 0.25 and each image has been colored either red or yellow such that the colors are heavily correlated with the class label. However, this color correlation is spurious as it has been reversed for test data. The correlations of labels with colors are +90\% and +80\% for training environments and -90\% for the test environment. 

Given the structure of data, if our learning algorithm only learns the spurious correlation (in this case color), the test accuracy will be 14\%, while if our model learns the invariant features, we can achieve 75\% test accuracy. The CMNIST setup \citep{arjovsky2019invariant} we use has also been followed by Fishr \citep{rame2022fishr} and V-Rex \citep{krueger2021out}. The only difference is that we replace the regularizer in IRM with our proposed regularizers in HGP and Hutchinson. In this setting, a simple fully connected network is used for training where we train the model for 501 steps and at step 190 the regularization parameters $\alpha$ and $\beta$ increase from 1 to 91257.18 suddenly (found by grid search for IRM). We repeated the experiments 10 times and reported the mean and average over these runs. 

\begin{table}[h]
\caption{Comparison of ERM, IRM, V-Rex, Fishr, and our proposed methods HGP and Hutchinson,  on Colored MNIST experiment in introduced in IRM \citep{arjovsky2019invariant} while the same hyperparameters have been used.}
 \label{table:Table2}
  \centering
  \begin{tabular}{lcc}
    \toprule
     Method & Train acc. & Test acc.    \\
    \toprule
        ERM & 86.4 ± 0.2 & 14.0 ± 0.7  \\ 
        IRM & 71.0 ± 0.5 & 65.6 ± 1.8  \\ 
        V-REx & 71.7 ± 1.5 & 67.2 ± 1.5  \\ 
        Fishr & 71.0 ± 0.9 & 69.5 ± 1.0 \\ 
        HGP & 71.0 ± 1.5 & 69.4 ± 1.3  \\
        Hutchinson & 61.7 ± 1.9 & \textbf{74.0 ± 1.2}  \\
    \bottomrule
  \end{tabular}
\end{table}

\begin{figure*}[ht]
  \centering
  \medskip
  \begin{subfigure}[t]{0.24\textwidth}
    \centering\includegraphics[width=4.22 cm]{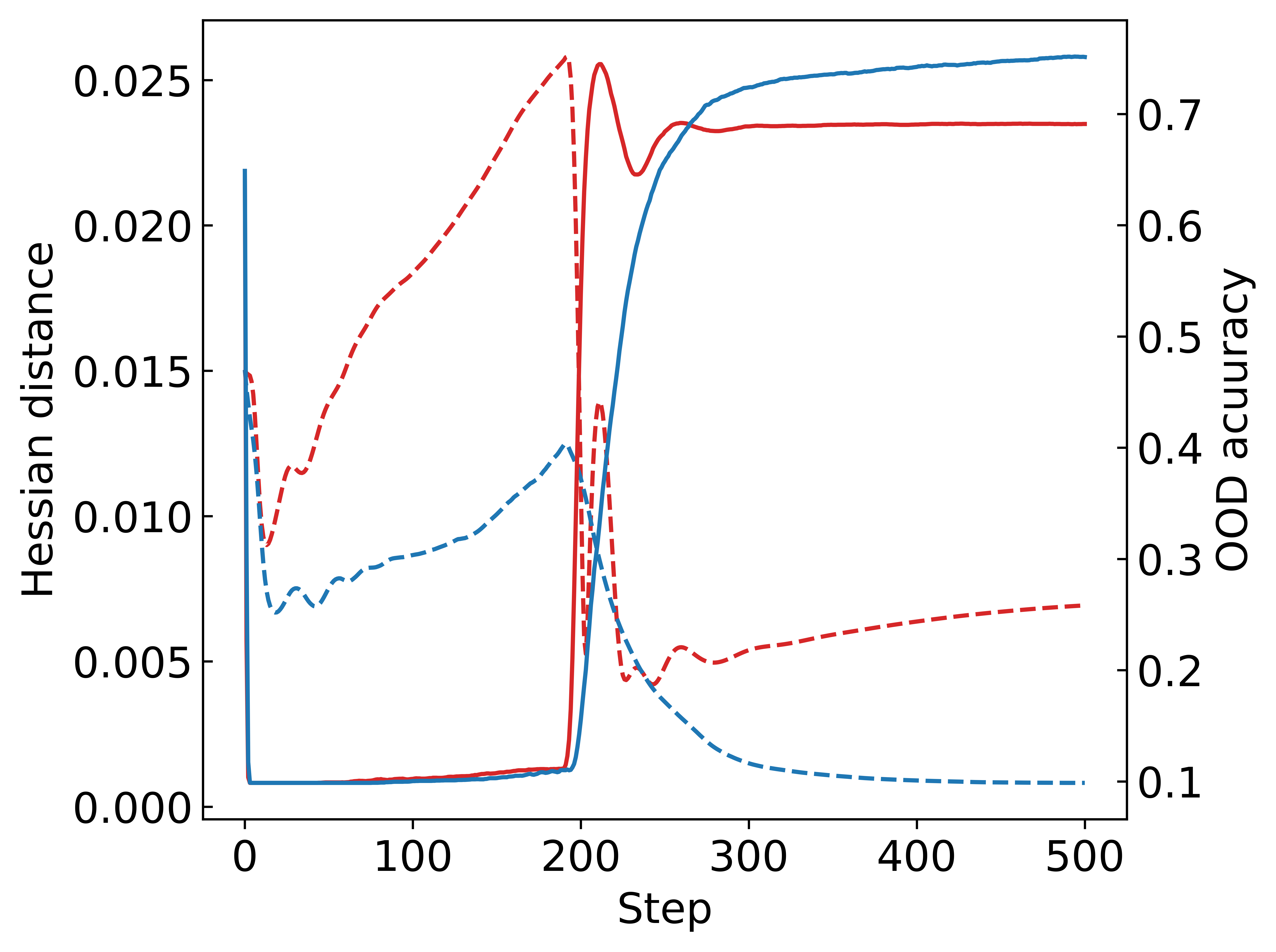}
    \caption{No label shift - accuracy}
    \label{figure:fig2-a}
  \end{subfigure}
\begin{subfigure}[t]{0.24\textwidth}
    \centering\includegraphics[width=4.22 cm]{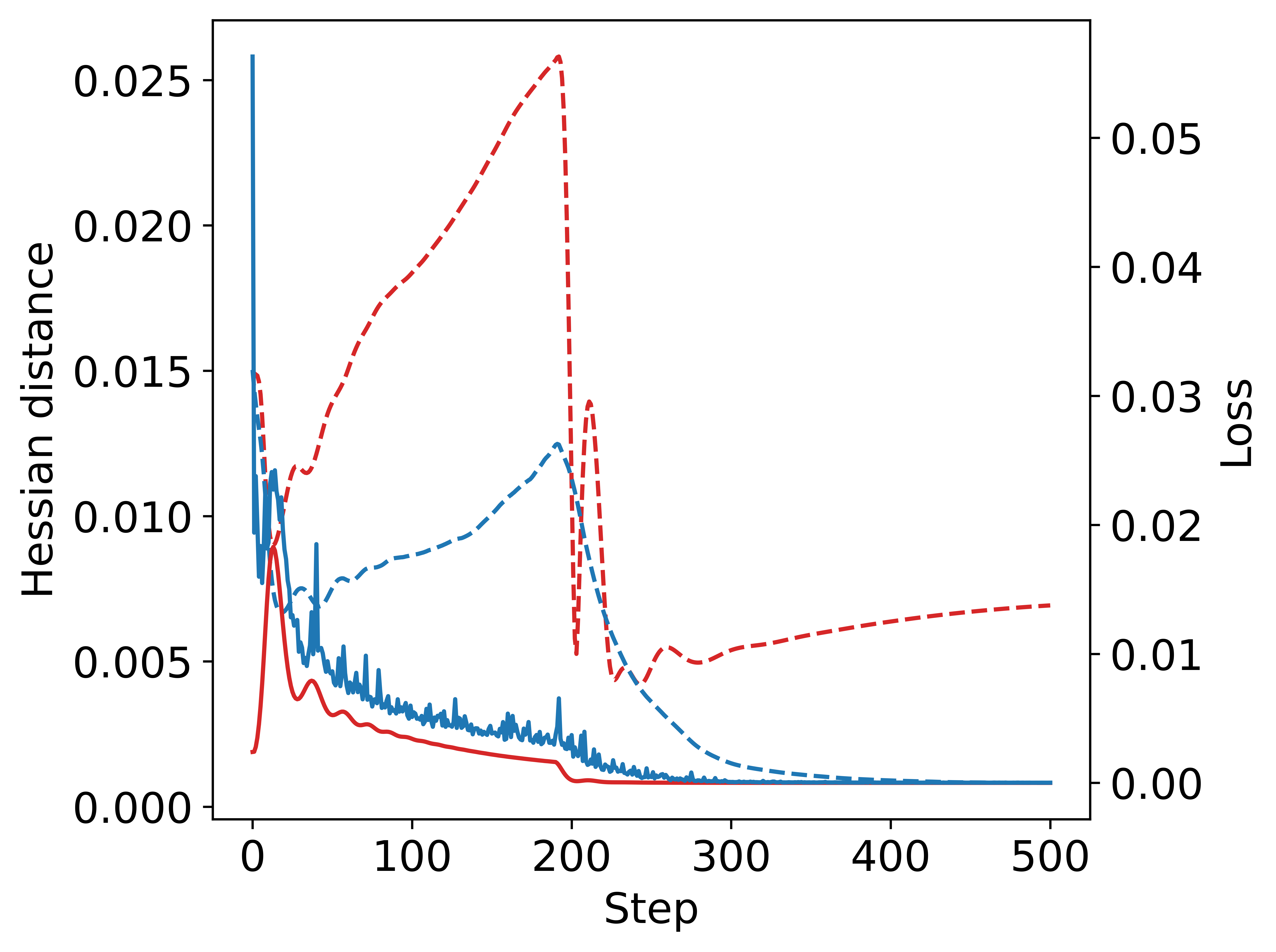}
    \caption{No label shift - loss}
    \label{figure:fig3-a}
    \end{subfigure}
\begin{subfigure}[t]{0.24\textwidth}
   \centering\includegraphics[width=4.22 cm]{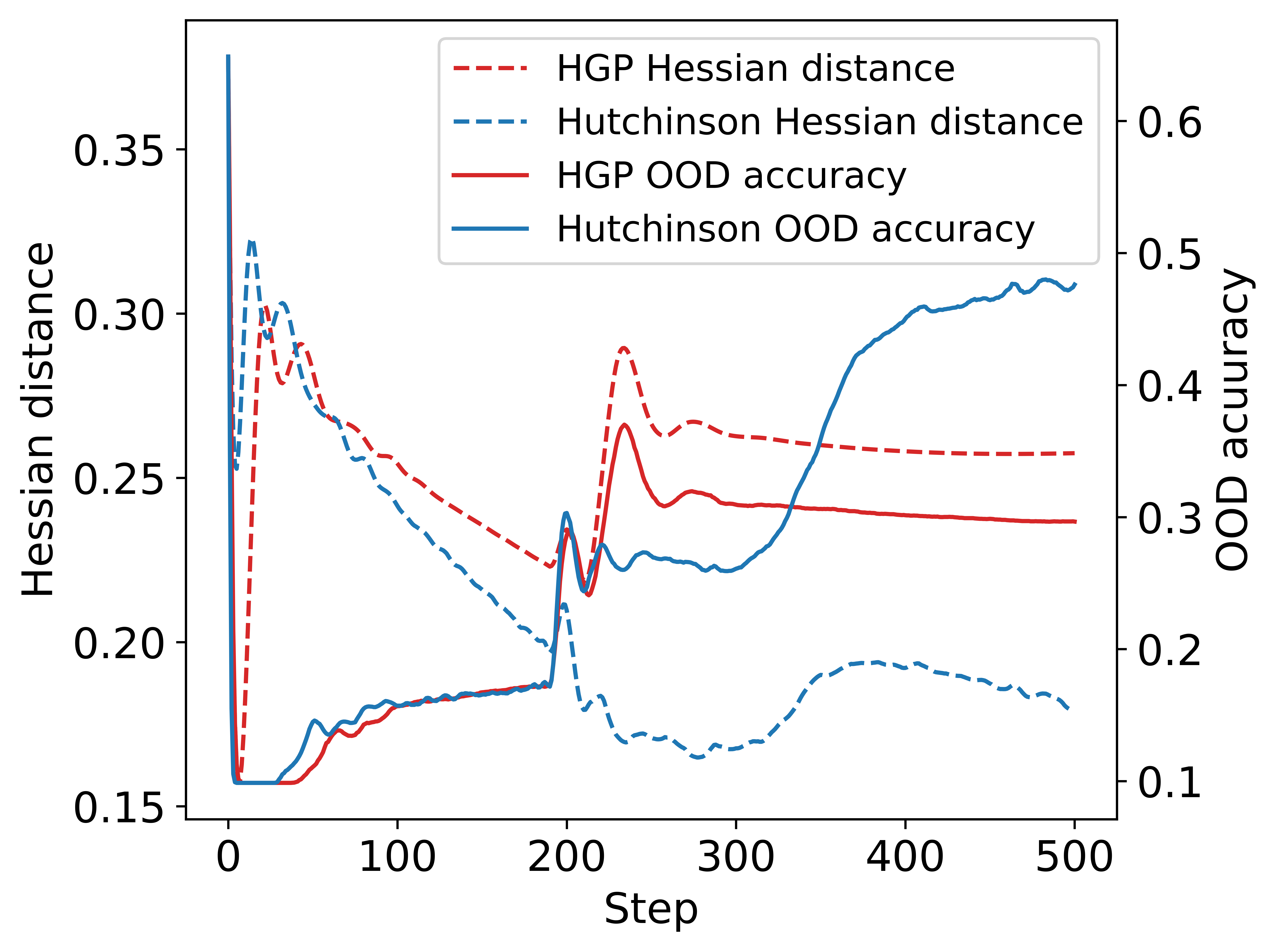}
    \caption{Heavy label shift - accuracy}
    \label{figure:fig2-b}
    \end{subfigure}
    \begin{subfigure}[t]{0.24\textwidth}
   \centering\includegraphics[width=4.22 cm]{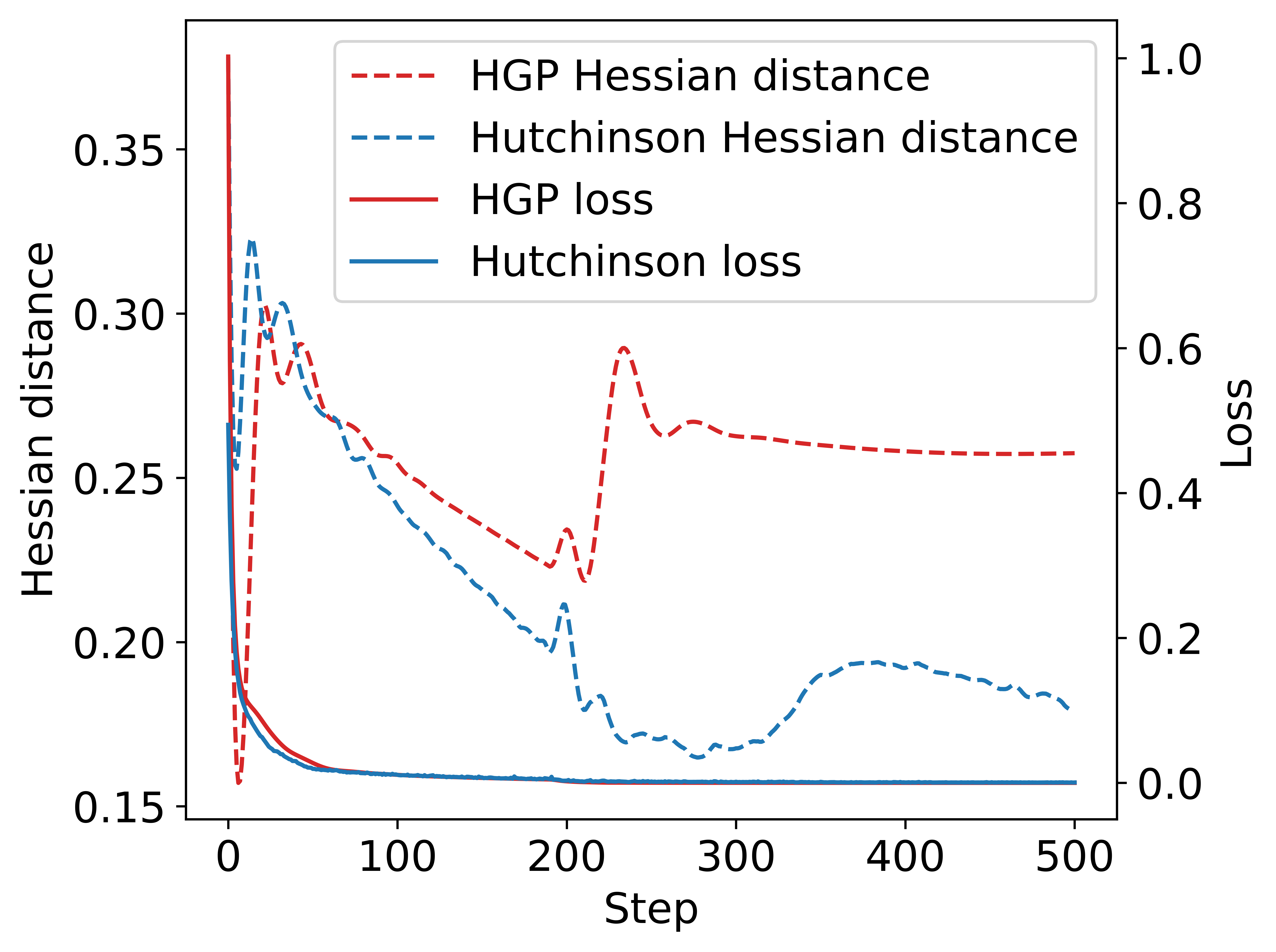}
    \caption{Heavy label shift - loss}
    \label{figure:fig3-b}
    \end{subfigure}
\caption{Correlation between Hessian distances ($\| \Hv_{\Sc_1}  - \Hv_{\Sc_2} \|_F$) and OOD accuracies/losses for HGP and Hutchinson regularization during the training for Colored MNIST. (a), (b): no label shift; (c), (d): heavy label shift. 
}
\label{fig:dynamics}
\end{figure*}
Table \ref{table:Table2} compares the performance of HGP and Hutchinson against common DG baselines. As can be seen, while HGP achieves competitive performance against the state-of-the-art Fishr algorithm, Hutchinson outperforms all methods by a large gap and obtains the near-optimal OOD accuracy of 74\%, which is close to the maximum achievable accuracy of 75\%.

\subsection{OOD generalization under label shift}

\noindent In order to explore the robustness of our proposed algorithm to label shift, we designed an imbalanced Colored MNIST dataset which is the same as the Colored MNIST dataset but we modified the dataset in a way that each of the two domains contain 95\% of data points from one class and 5\% from the other class. In fact, in the imbalanced Colored MNIST dataset, both correlation shift and label shift exist. We repeated the experiments in Table \ref{table:Table2} on imbalanced Colored MNIST and reported the results in Table \ref{table:Table3}. Looking at Table \ref{table:Table3}, in the heavy label shift case, the Hutchinson achieves the highest OOD accuracy which outperforms the best-performing algorithm by a margin of 12\%.

\begin{table}[ht]
\caption{Comparison of ERM, IRM, V-Rex, Fishr, and our proposed methods HGP and Hutchinson on Imbalanced Colored MNIST where each domain has $95\%$ from one class and $5\%$ from other class. Except for the imposed label shift, the setting is same as Colored MNIST experiment introduced in IRM \citep{arjovsky2019invariant}.}
  \label{table:Table3}
  \centering
  \begin{tabular}{lcc}
    \toprule
     Method & Train acc. & Test acc.    \\
    \toprule
        ERM &  86.4 ± 0.1 & 16.7 ± 0.1   \\ 
        IRM & 84.9  ±  0.1 &14.3  ± 1.4   \\ 
        V-REx & 83.3 ± 0.2  & 35.1 ± 1.2   \\ 
        Fishr &  75.7 ± 2.7  &  35.5 ± 5.3   \\ 
        HGP &  83.0  ± 0.2  & 30.0  ± 0.9   \\ 
        Hutchinson &  79.4  ± 0.3  & \textbf{47.7  ± 1.4}   \\ 
    \bottomrule
  \end{tabular}
\end{table}

\subsection{Training dynamics for Colored MNIST and Imbalanced Colored MNIST}

\noindent Similar to earlier works \citep{arjovsky2019invariant,krueger2021out,rame2022fishr}, we study the training and OOD generalization dynamics of HGP and Hutchinson on CMNIST and imbalanced CMNIST datasets. Figure \ref{figure:fig2-a} and Figure \ref{figure:fig3-a} show the Hessian distance $\| \Hv_{\Sc_1}  - \Hv_{\Sc_2} \|_F$ and OOD accuracy/loss of HGP and Hutchinson during the training. From Figure \ref{figure:fig2-a} when regularization parameters increase at step 190, the Hessian distance significantly drops and immediately OOD accuracy increases which suggests a strong negative correlation between the Hessian distance and OOD accuracy. This correlation is more evident for Hutchinson compared to HGP since after step 300, the Hessian distance keeps decreasing and OOD accuracy increases correspondingly. In contrast, for HGP, after step 300 when we are close to convergence, OOD accuracy becomes stable and even there is a slight increase in the Hessian distance. This is because close to convergence the gradients become zero and HGP loss cannot align Hessians as well as Hutchinson.

\begin{figure}[h]
\centering
\centering\includegraphics[width=.9\linewidth]{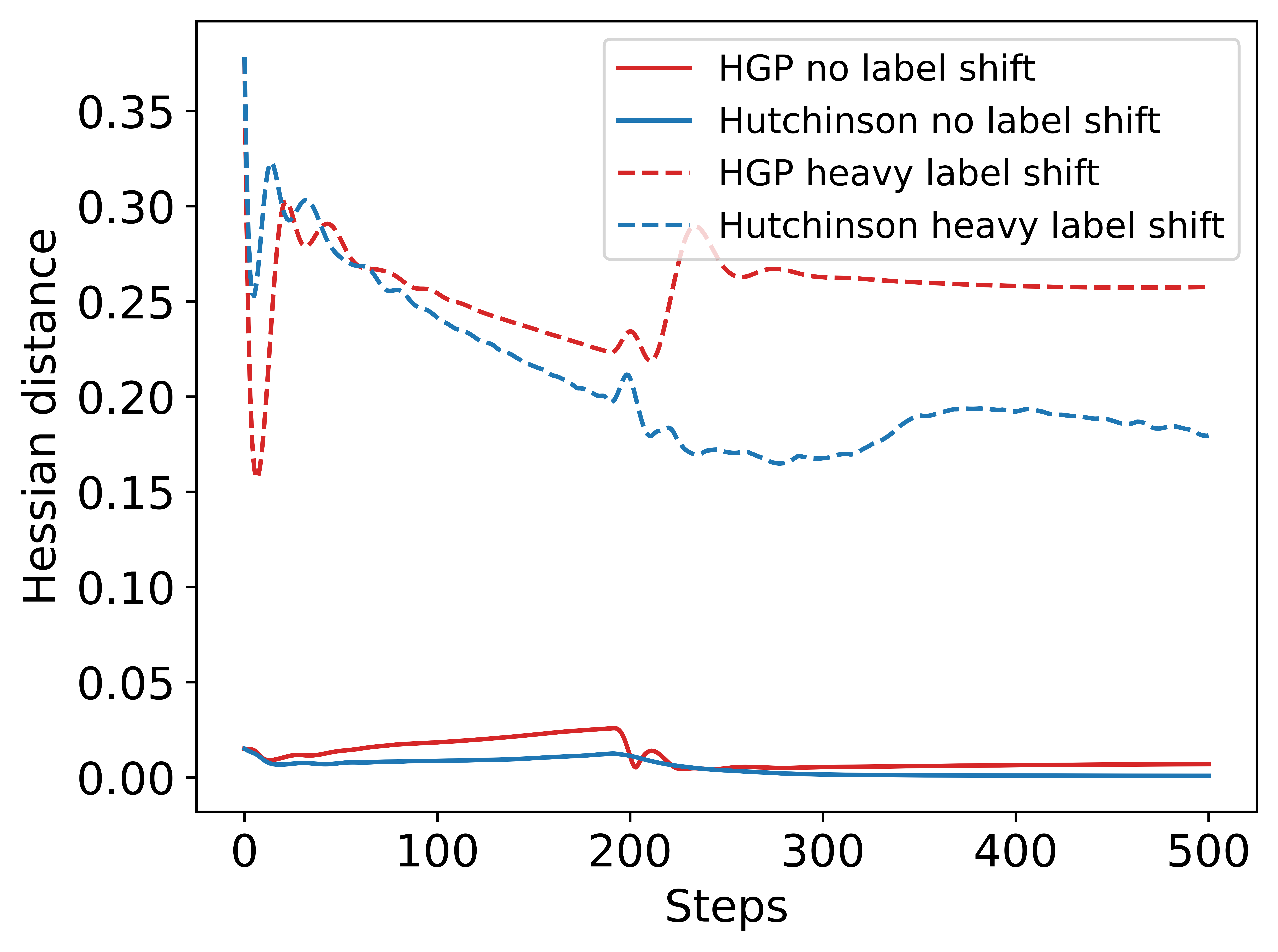}
\caption{The Hessian distances $\| \Hv_{\Sc_1}  - \Hv_{\Sc_2} \|_F$ in Hutchinson and HGP regularization losses during the training steps for no label shift and heavy label shift in Colored MNIST experiment. Clearly, for heavy label shift, the Hessian distance is significantly more than no label shift case.}
\label{fig:4}
\end{figure}

For the heavy label shift case, Figure \ref{figure:fig2-b} and \ref{figure:fig3-b} show that it is much more difficult to decrease the Hessian distance in general. Besides, Hutchinson is more capable of reducing the Hessian distance compared to HGP and subsequently, it achieves better OOD accuracy.

Finally, Figure~\ref{fig:4} compares Hessian distance while we use HGP and Hutchinson regularizers in no label shift and heavy label shift datasets. Clearly, for heavy label shift, the Hessian distance is significantly larger which shows that transferability between domains becomes more difficult. This experiment validates our argument that Hessian distance captures transferability between domains.

\subsection{OOD generalization under diversity shift}

\noindent In this section we use the Domainbed benchmark \citep{gulrajani2020search} to evaluate OOD accuracy of HGP and Hutchinson. We validate our proposed algorithms on the VLCS \citep{fang2013unbiased}, PACS \citep{li2017deeper},  OfficeHome \citep{venkateswara2017deep}, and DomainNet \citep{peng2019moment} datasets from the DomainBed benchmark. Domainbed provides a standard scheme to validate the performance of DG algorithms in a fair manner with different model selections. Table \ref{table:leave_one_out} provides the performance of HGP and Hutchinson methods against multiple benchmarks for leave-one-domain-out model selection reported from Domainbed. It shows that our Hutchinson method gives the state-of-the-art OOD performance in most cases. For HGP, our results are not as good as expected since the gradients diminish near convergence. The most competitive method to Hutchinson is CORAL, which we showed approximately aligns Hessians (by matching covariances) and gradients (by matching features). The results for other model selections are reported in the supplementary material.

\begin{table*}[h]
\caption{DomainBed benchmark with leave-one-domain-out cross-validation model selection for CMNIST, VLCS, PACS, and OfficeHome datasets.  We show the best and second best number with boldface and underline respectively. Due to time and computation limit, we did not finish the Fishr experiment on DomainNet. }
\label{table:leave_one_out}
\centering
\begin{tabular}{lccccc}
\toprule
\textbf{Algorithm}                & \textbf{VLCS}             & \textbf{PACS}             & \textbf{OfficeHome}   & \textbf{DomainNet}         & \textbf{Avg}              \\
\midrule
ERM   \citep{vapnik1999overview}                                      & 77.2             & 83.0            & 65.7       & 40.6               & 66.6                     \\
IRM \citep{arjovsky2019invariant}                                      & 76.3            & 81.5            & 64.3      &33.5          & 63.9                     \\
GroupDRO  \citep{sagawa2019distributionally}                                  & 77.9          & \underline{83.5}            & 65.2      &33.0      & 64.9                     \\
Mixup   \citep{wang2020heterogeneous}                                & 77.7            & 83.2             & 67.0           &38.5       & 66.6                    \\
MLDG   \citep{li2018learning}                               & 77.2            & 82.9            & 66.1      &41.0          & 66.8                    \\
CORAL \citep{sun2016deep}                                      & \underline{78.7}         & 82.6             & \textbf{68.5}     &\underline{41.1}      &\underline{67.7}                   \\
MMD    \citep{li2018domain}                                       & 77.3           & 83.2            & 60.2              &23.4         & 61.0                   \\
DANN  \citep{ganin2016domain}                                      & 76.9            & 81.0            & 64.9        & 38.2         & 65.2                    \\
CDANN \citep{zhou2021domain}                                       & 77.5           & 78.8            & 64.3         &38.0            &  64.6        \\
MTL    \citep{blanchard2021domain}                                & 76.6               & 83.7             & 65.7     & 40.6    &66.7  \\
SagNet   \citep{nam2020learning}                                 & 77.5            & 82.3             & 67.6            & 40.2             & 66.9                     \\
ARM   \citep{zhang2020adaptive}                                   & 76.6           & 81.7             & 64.4               & 35.2       & 64.5                  \\
VREx    \citep{krueger2021out}                           & 76.7           & 81.3             & 64.9              & 33.4     & 64.1               \\
RSC   \citep{huang2020self}                             & 77.5          & 82.6             & 65.8        & 38.9        & 66.2                 \\

Fishr   \citep{rame2022fishr}                              & 78.2          & \textbf{85.4}       & \underline{67.8}   &  - & -  \\
\midrule
 HGP                                          & 76.7           & 82.2            & 67.5         &\underline{41.1}      &66.9                    \\
 Hutchinson                              & \textbf{79.3}           & \underline{84.8}            & \textbf{68.5}           &\textbf{41.4}     & \textbf{68.5}                  \\
 
\bottomrule
\end{tabular}
\end{table*}

\begin{table}[!h]
\caption{Training time comparison on CMNIST between ERM, IRM, V-Rex, Fishr, and our proposed methods HGP and Hutchinson for 100 steps averaged over 10 runs.}
  \label{table:Table5}
  \centering
  \begin{tabular}{lc}
    \toprule
     Method & Train acc.     \\
    \toprule
       ERM & 15.3 ± 0.1   \\ 
        CORAL & 32.0 ± 3.2  \\
        V-REx & 34.1 ± 2.3  \\ 
        IRM & 27.0 ± 4.6   \\ 
        GroupDRO & 32.8 ± 2.9  \\ 
        Fishr & 25.3 ± 2.1  \\ 
        \midrule
        HGP & 39.5 ± 4.4   \\
        Hutchinson & 62.4 ± 1.5   
       \\
    \bottomrule
  \end{tabular}
\end{table}

\subsection{Robustness to adversarial shift}

Robustness of deep learning models to adversarial attacks is a notion of their generalization ability. To explore adversarial robustness of models trained using HGP and Hutchinson loss, we evaluate their performance against the Fast Gradient Sign Method (FSGM) attack \citep{goodfellow2014explaining} and benchmark their performances against, ERM, IRM, V-Rex, and Fishr on CMNIST in \Cref{fig:5}. We see that our Hutchinson method is also more transferable to adversarially perturbed domains. 

\begin{figure}[h]
\centering
\centering\includegraphics[width=.9\linewidth]{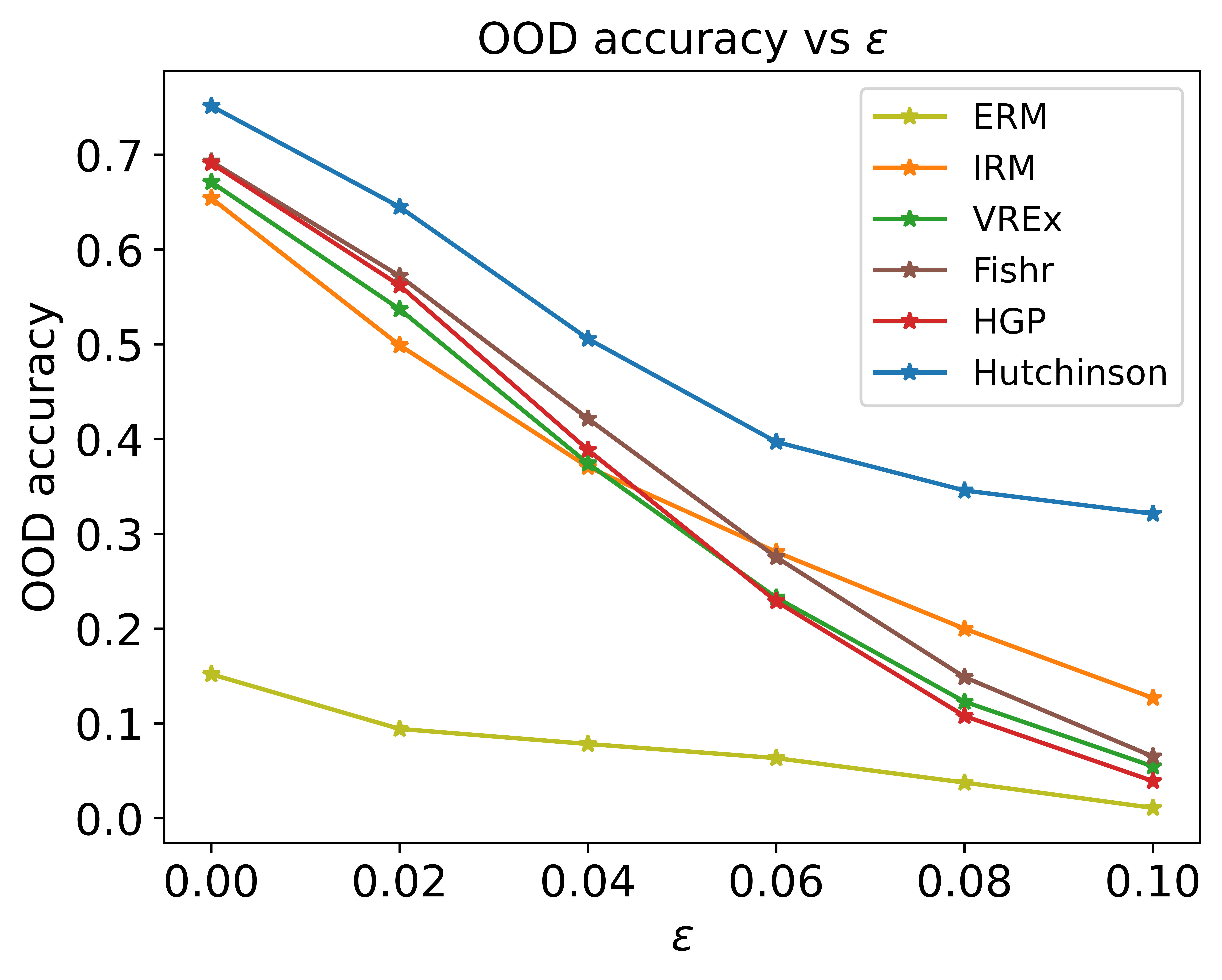}
\caption{Comparison of OOD accuracy under adversarial attack. $\varepsilon$ denotes the perturbation amplitude.}
\label{fig:5}
\end{figure}

\subsection{Computational time comparison}

For CMNIST experiments, we recorded the training time of our proposed algorithms (HGP and Hutchinson) and several popular domain generalization algorithms. The results of this experiment are presented in Table~\ref{table:Table5}. We note that our Hessian alignment methods have more computation cost than existing algorithms, due to the Hessian estimation. This is the expense of better OOD generalization. However, the additional cost is controllable.

\begin{table}[!h]
\caption{Ablation study of the Hutchinson method on different alignment regularization on the PACS dataset when the test domain is sketch.}
  \label{table:Table6}
  \centering
  \begin{tabular}{lc}
    \toprule
     Method & Test acc.     \\
    \toprule
        Hessian \& gradient &  81.4   \\ 
        Gradient only  & 77.0\\
        Hessian only   & 79.4   \\
    \bottomrule
  \end{tabular}
\end{table}
 
\subsection{The roles of Hessian and gradient alignment}

\noindent Since we have implemented both Hessian and gradient alignment in our algorithms, which part is playing a main role? In this section, we conduct an ablation study to investigate the role of each regularization term in the loss functions of Hessian alignment. Recall that $\alpha$ is the regularization weight of Hessian alignment and $\beta$ is the weight of the gradient alignment.
We set up an experiment with the PACS dataset and set the Sketch domain as test domain. We compare the Hutchinson algorithm in three different scenarios:
\begin{itemize}
\item Both: $\alpha=10^{-5}, \beta=10^{-5}$;
\item Only Hessian alignment: $\alpha=10^{-5}, \beta=0$;
\item  Only gradient alignment: $\alpha=0, \beta=10^{-5}$.
\end{itemize}
 The test domain accuracies for models trained with different hyperparameters are presented in Table~\ref{table:Table6}. Clearly, with both Hessian and gradient alignment we achieve the best OOD generalization. Moreover, Hessian alignment plays a more important role than gradient alignment. Additional ablation studies for CMNIST experiment are presented in the appendix.

\subsection{Implementation details}

\noindent We provide the implementation details for all the experiments we conducted in the paper. For the transferability experiments in \Cref{fig:attack}, the experimental setup is exactly the same as the one used in \citet{zhang2021quantifying}. For regularization parameters, we used $\alpha = 10^{-5}$ and $\beta = 10^{-3}$ for HGP and Hutchinson methods. For the CMNIST experiment, the experimental setting is exactly the same as the IRM paper \citep{arjovsky2019invariant}. To set the regularization parameters, we also used the same scheme employed in the IRM paper \citep{arjovsky2019invariant}. The only difference here is that we have two regularization parameters so we set $\alpha = \beta$. For DomainBed experiments, the setup is the same as in \citet{gulrajani2020search}. For $\alpha$ and $\beta$, we used ranges $(-3, -5)$, $(-1, -3)$ for HGP and $(-1, -3)$, $(-2, -4)$ for Hutchinson, respectively.

\section{Conclusions}

\noindent In this paper, we study the role Hessian alignment in domain generalization. Hessian alignment of the classifier head does not only improve the transferability to new domains, but also serves as an effective feature matching mechanism. In order to overcome the difficulty of heavy computation, we propose two estimation methods for Hessians, using Hessian-gradient product and Hutchinson's diagonal estimation. These methods are tested to be effective in various scenarios, including spurious correlation (CMNIST), standard diversity shift benchmark (Domainbed), and adversarial shift. At the expense of slightly heavier computation, Hessian alignment performs competitively, often achieving the state-of-the-art. To our knowledge, our method provides the first Hessian alignment in domain generalization. In the future, it would be interesting to study and compare more efficient ways to align Hessians and gradients across domains.

\section*{Acknowledgements}

We want to thank the anonymous ICCV area chair and reviewers for constructive feedback. GZ would like to thank Han Zhao for valuable comments of the camera ready draft.

\clearpage

{\small
\bibliographystyle{apalike}
\bibliography{egbib}
}

\appendix

\clearpage

\onecolumn

\section{Proof of Proposition \ref{prop:1}}

\begin{proof}

To formulate the classifier head of the neural network, let $z_i$ be the $i$-th component of the feature vector before the classifier layer and the classifier's parameter $\thetav$ is decomposed to $w_{k,i}$, the element in row $k$ and column $i$ of the classifier weight matrix, and $b_k$, the bias term for the $k$-th output. We define $a_k$ as 
\begin{equation} 
a_k= \sum_i^c w_{k,i} z_i + b_k,
\label{eq:21} 
\end{equation}
where $c$ is the number of classes. Given $a_k$, if we assume the classifier activation function $\sigma(\cdot)$ to be softmax, the classifier output for the $k$-th neuron can be written as
\begin{equation} 
\hat{y}_k = \sigma(a_k) = \frac{e^{a_k}}{\sum_{j=1}^c e^{a_j}},
\label{eq:22} 
\end{equation}
Now, if we denote $(\xv, \yv)$ as the sample and $\hat{\yv}$ as the associated output, and assume the loss function be cross-entropy
\begin{equation} 
\ell(\xv, \yv;\thetav) = - \sum_{l=1}^c y_l \log \hat{y}_l,
\label{eq:23} 
\end{equation}
Having the loss function, we proceed to calculate the gradients and hessian of loss with respect to classifier parameters $w_{p,q}$ and $b_p$ is 
\begin{align} 
&\frac{\partial \ell(\hat{\yv}, \yv;\thetav)}{\partial w_{p,q}} = \sum_{l=1}^c \frac{\partial \ell}{\partial \hat{y}_l} \sum_{k=1}^c \frac{\partial  \hat{y}_l}{\partial a_k} \frac{\partial a_k}{\partial w_{p,q}}.
\label{eq:24}
\end{align}

\begin{align} 
&\frac{\partial \ell (\hat{\yv}, \yv;\thetav)}{\partial b_u} = \sum_{l=1}^{c}\frac{\partial  \ell}{\partial \hat{y}_l} \sum_{k=1}^c \frac{\partial  \hat{y}_l}{\partial a_k} \frac{\partial a_k}{\partial b_u}.
\label{eq:25} 
\end{align}

Given that the activation function is a softmax function $\hat{y}_l = \sigma(a_l) = \frac{e^{a_l}}{\sum_j e^{a_j}}$, the $\frac{\partial \sigma(a_l)}{\partial a_k}$ can be calculated as:
\begin{equation} 
\frac{\partial \sigma(a_l)}{\partial a_k} = \sigma(a_l) \delta_{l,k} - \sigma(a_l) \sigma(a_k)
\label{eq:26}.
\end{equation}

Now, using Eq. \ref{eq:26}, we can rewrite Eqs. \ref{eq:24} and \ref{eq:25} as follows:
\begin{equation} 
\frac{\partial  \ell}{\partial w_{p,q}} = (\hat{y}_p - y_p) z_q , \, \frac{\partial  \ell}{\partial b_u} = (\hat{y}_u - y_u).
\label{eq:28} 
\end{equation}

For the Hessian, we only calculate the elements of matrix that are only related to the classifier layer. More precisely, we calculate  $\frac{\partial ^2  \ell}{\partial w_{u,v} \partial w_{p,q}}$,  $\frac{\partial ^2  \ell}{\partial w_{p,q} \partial b_u}$, and $\frac{\partial ^2  \ell}{\partial b_u \partial b_v}$. 
For the $\frac{\partial ^2  \ell}{\partial w_{u,v} \partial w_{p,q}}$ we can write

\begin{equation} 
\begin{aligned}
\frac{\partial ^2  \ell}{\partial w_{u,v} \partial w_{p,q}} = \frac{\partial}{\partial w_{u,v}} \left((\hat{y}_p - y_p) z_q \right). 
\label{eq:29} 
\end{aligned}
\end{equation}

To calculate the above expression, we need $\frac{\partial \hat{y}_p}{\partial w_{u,v}}$:
\begin{align}
\frac{\partial \hat{y}_p}{\partial w_{u,v}} = \sum_{k} \frac{\partial  \hat{y}_p}{\partial a_k} \frac{\partial a_k}{\partial w_{u,v}} = \sum_{k} \frac{\partial \sigma(a_p)}{\partial a_k} \sum_{i} \delta_{k,u} \delta_{i,v} z_i = \sum_{k} \sigma(a_p)(\delta_{p,k}-\sigma(a_k)) \delta_{k,u} z_v = \hat{y}_p z_v (\delta_{p,u}-\hat{y}_u)
\label{eq:30} 
\end{align}
Now, incorporating eq.~\ref{eq:30} into eq.~\ref{eq:29}, the elements of Hessian matrix in classifier layer i.e., $\frac{\partial ^2  \ell}{\partial w_{u,v} \partial w_{p,q}}$, we have:
\begin{align}
\frac{\partial ^2  \ell}{\partial w_{u,v} \partial w_{p,q}} =z_q z_v \hat{y}_p (\delta_{p,u}-\hat{y}_u).   
\label{eq:31} 
\end{align}
For $\frac{\partial ^2  \ell}{\partial w_{p,q} \partial b_u}$ we can write
\begin{align}
    \frac{\partial ^2  \ell}{\partial w_{p,q} \partial b_u} = \frac{\partial}{\partial b_u} \left((\hat{y}_p - y_p) z_q \right) =  \frac{\partial (\hat{y}_p - y_p)}{\partial b_u} z_q  = 
    \sum_{k} \frac{\partial \hat{y}_p}{\partial a_k} \frac{\partial a_k}{\partial b_u} z_q = z_q \hat{y}_p (\delta_{p,u}-\hat{y}_u).
\label{eq:32}
\end{align}
Eventually, for $\frac{\partial ^2 \mathcal{L}}{\partial b_u \partial b_v}$ we have:
\begin{align}
    \frac{\partial^2  \ell}{\partial b_u \partial b_v} = \frac{\partial}{\partial b_v} \left(\hat{y}_u - y_u \right) =
    \sum_k \frac{\partial \hat{y}_u}{\partial a_k} \frac{\partial a_k}{\partial b_v}  = 
    \hat{y}_u (\delta_{u,v}-\hat{y}_v). 
\label{eq:33} 
\end{align}
\end{proof}

\section{Alignment attributes in Hessian and gradient in regression task with mean square error loss and general activation function}

In this section, we extend our analysis to regression tasks for real numbers. We show that the classifier head's gradient and Hessian yield similar information of the features. To adapt our framework to regression, we replace the meaning of $\sigma$ from softmax to an arbitrary uni-variate activation function.

\begin{restatable}[\textbf{Alignment attributes in Hessian and gradient for mean square error loss and general activation function}]{prop}{prop2_with_proof}\label{prop:MSE} 

Let $\hat{y}$ and $y$ be the network prediction and true target associated with the output neuron of a single output network, $\sigma(\cdot)$ be the activation function, $z_i$ be the $i$-th feature value before the last layer (regression layer). Suppose the last layer's parameter $\thetav$ is decomposed to $w_i$, the $i$-th element of the weight vector, and $b$, the bias term. Matching the gradients and Hessians with respect to the last layer across the domain aligns the following attributes 
\begin{align}
\frac{\partial \ell}{\partial b} &= (\hat{y} - y) \sigma'(a),  \label{eq:34} \\
\frac{\partial  \ell}{\partial w_i} &=  (\hat{y} - y) \sigma'(a) z_i,  \label{eq:35} \\
\frac{\partial ^2 \ell}{\partial b^2} &=\sigma'(a)^2  + (\hat{y} - y) \sigma''(a),  \label{eq:36}\\
\frac{\partial ^2 \ell}{\partial w_i \partial b} &= \sigma'(a)^2 z_i + (\hat{y} - y) \sigma''(a) z_i,  \label{eq:37}\\
\frac{\partial ^2 \ell}{\partial w_i \partial w_k} &= \sigma'(a)^2 z_i  z_k + (\hat{y} - y) \sigma''(a) z_i z_k,  \label{eq:38}
\end{align}
   
\end{restatable}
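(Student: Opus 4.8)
The plan is to compute the gradients and Hessian entries directly by the chain rule, exactly mirroring the softmax/cross-entropy derivation in Proposition~\ref{prop:1} but now with an arbitrary univariate activation $\sigma$ and squared-error loss. First I would fix notation: write $a = \sum_i w_i z_i + b$ for the pre-activation, $\hat{y} = \sigma(a)$ for the prediction, and $\ell = \tfrac12(\hat{y}-y)^2$ for the loss. The key observation is that every derivative with respect to a weight $w_i$ or the bias $b$ factors through $a$, with $\partial a/\partial w_i = z_i$ and $\partial a/\partial b = 1$. So each partial derivative is just a derivative with respect to $a$ multiplied by the appropriate feature (or $1$ for the bias), which is what makes the weight-versus-bias formulas differ only by the insertion of $z_i$ (or $z_i z_k$).

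The first-order part is immediate: since $\partial \ell/\partial \hat{y} = (\hat{y}-y)$ and $\partial \hat{y}/\partial a = \sigma'(a)$, the chain rule gives $\partial \ell/\partial b = (\hat{y}-y)\sigma'(a)$ and $\partial \ell/\partial w_i = (\hat{y}-y)\sigma'(a) z_i$, establishing \eqref{eq:34} and \eqref{eq:35}. For the second-order terms I would differentiate the first-order expression $(\hat{y}-y)\sigma'(a)$ once more with respect to $a$ (then attach the relevant feature factors). This requires a product rule: differentiating $(\hat{y}-y)\sigma'(a)$ with respect to $a$ yields $\sigma'(a)\cdot\sigma'(a) + (\hat{y}-y)\sigma''(a)$, using $\partial(\hat{y}-y)/\partial a = \sigma'(a)$ and $\partial\sigma'(a)/\partial a = \sigma''(a)$. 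This single computation produces the common scalar $\sigma'(a)^2 + (\hat{y}-y)\sigma''(a)$ appearing in \eqref{eq:36}, \eqref{eq:37}, and \eqref{eq:38}; the three equations then differ only by the feature factors $1$, $z_i$, and $z_i z_k$ respectively, coming from $\partial a/\partial b = 1$, $\partial a/\partial w_i = z_i$, and the combined $\partial a/\partial w_i \cdot \partial a/\partial w_k = z_i z_k$.

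There is no genuinely hard step here; the derivation is a routine chain-and-product-rule exercise once the factoring-through-$a$ structure is recognized. The only point requiring a little care is the bookkeeping in the mixed and pure second derivatives, namely remembering that differentiating $z_q$ (or $z_i$) with respect to $b$ or another $w$ gives zero (since the features are inputs to the head, not functions of the head parameters), so the product rule acts only on the $(\hat{y}-y)\sigma'(a)$ factor. With that observation the feature factors pull out cleanly and the five displayed identities follow directly.
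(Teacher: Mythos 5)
Your proposal is correct and follows essentially the same route as the paper's proof: a direct chain-rule computation of the first-order terms followed by a product-rule differentiation of $(\hat{y}-y)\sigma'(a)$, with the feature factors $1$, $z_i$, $z_i z_k$ arising from $\partial a/\partial b$ and $\partial a/\partial w_i$. Your observation that all five identities factor through a single derivative with respect to $a$ is a slightly cleaner organization of the same calculation, but not a different argument.
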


\begin{proof}

To formulate the last layer of the neural network we define $a$ as

\begin{equation} 
a= \sum_i w_i z_i + b.
\label{eq:39} 
\end{equation}

\noindent Given $a$, if we assume the last layer activation function is $\sigma(\cdot)$, the single output can be written as

\begin{equation} 
\hat{y} = \sigma(a).
\label{eq:40} 
\end{equation}

\noindent Now, if we denote the input data as $(\xv, y)$  and associated output $\hat{y}$, and assume the loss function be
\begin{equation} 
\ell(\xv, y;\thetav) = \frac{1}{2} (\hat{y} - y)^2.
\label{eq:41} 
\end{equation}

Having the loss function, we proceed to calculate the gradients and hessian of loss with respect to last layer parameters $w_i$ and $b$. For the gradients, we write
\begin{align} 
\label{eq:42} \frac{\partial \ell} {\partial w_i} &= \frac{\partial \mathcal{L}}{\partial \hat{y}} \frac{\partial  \hat{y}}{\partial a} \frac{\partial a}{\partial w_i} = (\hat{y} - y) \sigma'(a) z_i, \\
\frac{\partial \ell} {\partial b} &= \frac{\partial \mathcal{L}}{\partial \hat{y}} \frac{\partial  \hat{y}}{\partial a} \frac{\partial a}{\partial b} = (\hat{y} - y) \sigma'(a).
\label{eq:43} 
\end{align}

For the Hessian matrix, we only calculate the elements of the matrix that are only related to the last layer. More precisely, we calculate  $\frac{\partial ^2 \mathcal{L}}{\partial w_i \partial w_k}$,  $\frac{\partial ^2 \mathcal{L}}{\partial w_i \partial b}$, and $\frac{\partial ^2 \mathcal{L}}{\partial b^2}$. 
For the $\frac{\partial ^2 \mathcal{L}}{\partial w_i \partial w_k}$ we can write

\begin{equation} 
\begin{aligned}
\frac{\partial ^2 \ell} {\partial w_i \partial w_k} = \frac{\partial}{\partial w_k} \left((\hat{y} - y) \cdot \sigma'(a) z_i \right) = \frac{\partial (\hat{y} - y)}{\partial w_k} \cdot \sigma'(a) z_i + (\hat{y} - y) \frac{\partial}{\partial w_k} \sigma'(a)  z_i
\label{eq:44} 
\end{aligned}
\end{equation}

To calculate the above expression, we need $\frac{\partial \hat{y}}{\partial w_k}$ and $\frac{\partial}{\partial w_k} \sigma'(a)$.

\begin{align}
&\frac{\partial \hat{y}}{\partial w_k} =  \frac{\partial  \hat{y}}{\partial a} \frac{\partial a}{\partial w_k} = \sigma'(a) z_k, \\
 & \frac{\partial}{\partial w_k} \sigma'(a) = \frac{\partial}{\partial a} \sigma'(a) \frac{\partial a}{\partial w_k}= \sigma''(a) z_k. 
\label{eq:46} 
\end{align}

Now, incorporating eq.~\ref{eq:46} into \ref{eq:44} the elements of the last-layer Hessian matrix become:
\begin{align}
\frac{\partial ^2 \ell}{\partial w_i \partial w_k} = \sigma'(a)^2 z_i  z_k + (\hat{y} - y) \sigma''(a) z_i z_k. 
\label{eq:47} 
\end{align}

For $\frac{\partial ^2 \mathcal{L}}{\partial w_i \partial b}$ we can write
\begin{align}
    \frac{\partial ^2 \ell}{\partial w_i \partial b} &= \frac{\partial}{\partial b} \left((\hat{y} - y) \cdot \sigma'(a) z_i \right) =   \frac{\partial (\hat{y} - y)}{\partial b} \cdot \sigma'(a) z_i  + (\hat{y} - y) \frac{\partial}{\partial b} \sigma'(a) z_i  \\
    & =
    \frac{\partial  \hat{y}}{\partial a} \frac{\partial a}{\partial b} \cdot \sigma'(a) z_i + (\hat{y} - y)\frac{\partial}{\partial a} \sigma'(a) \frac{\partial a}{\partial b} z_i = 
    \sigma'(a)^2 z_i + (\hat{y} - y) \sigma''(a) z_i.
\label{eq:48} 
\end{align}

Eventually, $\frac{\partial ^2 \mathcal{L}}{\partial b^2}$ is calculated as:

\begin{align}
    \frac{\partial ^2 \ell}{\partial b^2} &= \frac{\partial}{\partial b} \left(\hat{y} - y) \cdot \sigma'(a) \right) =  \frac{\partial (\hat{y} - y)}{\partial b} \cdot \sigma'(a)  + (\hat{y} - y) \frac{\partial}{\partial b} \sigma'(a)  \\
     &= \frac{\partial  \hat{y}}{\partial a} \frac{\partial a}{\partial b}\cdot \sigma'(a) + (\hat{y} - y)\frac{\partial}{\partial a} \sigma'(a) \frac{\partial a}{\partial b} = 
    \sigma'(a)^2  + (\hat{y} - y) \sigma''(a). 
\label{eq:49} 
\end{align}

\end{proof}

Eqs.~\ref{eq:43}, \ref{eq:47}, \ref{eq:48}, \ref{eq:49} show that matching Hessians and gradients with respect to the last layer parameters will match neural network outputs, last layer input features and covariance between output features across domains. This supports the idea of matching gradients and Hessians during training. The above result can be extended to multi-dimensional outputs if the activation is element-wise.

\section{Ablation Study}

\begin{table}[h]
\caption{Comparison of ERM, IRM, V-Rex, Fishr, and our proposed methods HGP and Hutchinson with ablation study on $\alpha$ and $\beta$ on Colored MNIST. The setting is same as the Colored MNIST experiment introduced in IRM \citep{arjovsky2019invariant}.}
  \label{table:Table9}
  \centering
  \begin{tabular}{lcc}
    \toprule
     Method & Train acc. & Test acc.    \\
    \toprule
        ERM & 86.4 ± 0.2  & 14.0 ± 0.7   \\ 
        IRM & 71.0 ± 0.5   &  65.6 ± 1.8   \\ 
        V-REx & 71.7 ± 1.5  & 67.2 ± 1.5  \\ 
        Fishr &  71.0 ± 0.9  &  69.5 ± 1.0   \\ 
        HGP &  71.0 ± 1.5   & 69.4 ± 1.3   \\ 
        HGP ($\alpha=0$) & 70.6 ± 1.8  &69.3 ± 1.2  \\
        HGP ($\beta=0$) & 78.9 ± 0.3   &53.3 ± 1.7   \\
        Hutchinson &  61.7 ± 1.9    & \textbf{74.0 ± 1.2}   \\ 
         Hutchinson ($\alpha=0$)& 70.6 ± 1.8 & 69.3 ± 1.2 \\
        Hutchinson  ($\beta=0$) & 84.9 ± 0.1 & 9.8 ± 0.2  \\
    \bottomrule
  \end{tabular}
\end{table}

\begin{table}[h]
\caption{Comparison of ERM, IRM, V-Rex, Fishr, and our proposed methods HGP and Hutchinson with ablation study on $\alpha$ and $\beta$ on Imbalanced Colored MNIST where each domain has $95\%$ from one class and $5\%$ from other class. Except for the imposed label shift, the setting is same as the Colored MNIST experiment introduced in IRM \citep{arjovsky2019invariant}.}
  \label{table:Table10}
  \centering
  \begin{tabular}{lcc}
    \toprule
     Method & Train acc. & Test acc.    \\
    \toprule
        ERM &  86.4 ± 0.1 & 16.7 ± 0.1   \\ 
        IRM & 84.9  ±  0.1 &14.3  ± 1.4   \\ 
        V-REx & 83.3 ± 0.2  & 35.1 ± 1.2   \\ 
        Fishr &  75.7 ± 2.7  &  35.5 ± 5.3   \\ 
        HGP &  83.0  ± 0.2  & 30.0  ± 0.9   \\ 
        HGP ($\alpha=0$) & 83.2 ± 0.4 &29.7 ± 1.7  \\
        HGP ($\beta=0$) & 84.3 ± 0.1  &20.4  ± 1.0   \\
        Hutchinson &  79.4  ± 0.3  & \textbf{47.7  ± 1.4}   \\ 
         Hutchinson ($\alpha=0$)& 83.2 ± 0.4 & 29.7 ± 1.7 \\
        Hutchinson  ($\beta=0$) & 84.9 ± 0.1 & 9.8 ± 0.2  \\
    \bottomrule
  \end{tabular}
\end{table}

We also repeat the Colored MNIST and imbalanced Colored MNIST experiments in scenarios where one of $\alpha$ and $\beta$ is non-zero, in Table~\ref{table:Table9} and Table~\ref{table:Table10}. Recall that $\alpha$ controls the Hessian alignment and $\beta$ controls the gradient alignment. If not mentioned, the values for $\alpha$ and/or $\beta$ are non-zero and they are chosen exactly as the IRM paper \citep{arjovsky2019invariant}. According to Table~\ref{table:Table9}, for both HGP and Hutchinson methods, gradient alignment seems to contribute more to OOD generalization on CMNIST. This might be due to the heavy correlation shift and we have to align the local minima first. For Hutchinson, when $\beta=0$, the OOD performance drops which we believe is because the value that has been chosen for $\alpha$ is optimized for the IRM loss. In other words, if we optimize $\alpha$ for aligning the diagonal part of Hessian, it can contribute to the OOD generalization. For imbalanced Colored MNIST, the same trend for the role of $\alpha$ and $\beta$ can be observed. Overall, the key observation is that both aligning gradients and diagonal parts of Hessians contribute to the OOD generalization.

\section{Domainbed Results for Other Model Section Methods}

\noindent In this section, we provide the Domainbed results for the two other model selection methods, i.e., the training-domain validation set and test-domain validation set (oracle). First note that the oracle model selection is not a valid benchmarking scheme and not applicable in practice as it uses the target domain data for selecting the hyperparameters. In fact, in this scenario, algorithms with more hyperparameters and training tricks (like warmup, exponential moving average and etc) can obtain better performance since they have more freedom to tune the model on test data. Considering this, we should not rely on the oracle model selection technique to compare domain generalization algorithms. 
The other model selection technique is the training-domain validation set where the validation sets of all training domain are concatenated together and select the hyperparameters that maximize the accuracy on the entire validation set.

As can be seen in Table~\ref{table:Table7} and Table~\ref{table:Table8}, although Hessian alignment methods are not the best, their performance across all datasets is still competitive for other model selections. As also shown in \citet{gulrajani2020search}, different model selections could result in different rankings of the algorithms. We find that training-domain model selection in general gives better results for most baseline algorithms, but the performance of the Hutchinson method slightly degrades. We defer the study of comparing model selections to future work.

\begin{table*}[h]
\caption{DomainBed benchmark with \emph{training-domain validation set model selection} method for CMNIST, VLCS, PACS, and OfficeHome datasets. We show the best and second best number with boldface and underline respectively.}
\label{table:Table7}
\centering
\begin{tabular}{lccccc}
\toprule
\textbf{Algorithm}        & \textbf{VLCS}             & \textbf{PACS}             & \textbf{OfficeHome}   & \textbf{DomainNet}             & \textbf{Avg}              \\
\midrule
ERM                      & 77.5            & 85.5         & 66.5   &40.9                      & 67.6                     \\
IRM                       & 78.5            & 83.5              & 64.3      & 33.9             & 65.1                      \\
GroupDRO                  & 76.7          & 84.4            & 66.0              &33.3      & 65.1                  \\
Mixup                      & 77.4            & 84.6             & 68.1             &39.2        & 67.3                   \\
MLDG                      & 77.2             & 84.9            & 66.8          &41.2            & 67.5                     \\
CORAL                   & \textbf{78.8}           & \underline{86.2}             & \textbf{68.7}   &41.5                 &\textbf{68.8}                      \\
MMD                       & 77.5             & 84.6             & 66.3          &23.4            & 63.0                    \\
DANN                  & \underline{78.6}             & 83.6            & 65.9           &38.3        & 66.6                  \\
CDANN                      & 77.5           & 82.6             & 65.8          &38.3          & 66.1                      \\
MTL                       & 77.2           & 84.6             & 66.4        &40.6           & 67.2                      \\
SagNet                        & 77.8             & \textbf{86.3}          & 68.1           &40.3          & 68.1                      \\
ARM                           & 77.6          & 85.1             & 64.8         &35.5              & 65.8                      \\
VREx                    & 78.3           & 84.9             & 66.4            &33.6         & 65.8                    \\
RSC                        & 77.1            & 85.2             & 65.5      &38.9                  & 66.7                  \\
 AND-mask                      & 78.1            & 84.4              & 65.5   &37.2                     &  66.3                  \\

 SAND-mask                       & 77.4            & 84.6             & 65.8                &32.1        & 65.0                  \\

 Fish                        & 77.8            & 85.5             &\underline{68.6}         &\textbf{42.7}               &  \underline{68.7}                  \\
 Fishr                          & 77.8           &85.8            & 67.8             &\underline{41.7}          &  68.3                \\
 \midrule
 HGP                                 & 77.6           & 84.7            & 68.2            &41.1   & 67.9                     \\
 Hutchinson                          & 76.8           & 83.9            & 68.2   &41.6           &   67.6                \\
 
\bottomrule
\end{tabular}
\end{table*}

\begin{table*}[h]
\caption{DomainBed benchmark with \emph{test-domain validation set (oracle)} model selection method for CMNIST, VLCS, PACS, and OfficeHome datasets. We show the best and second best number with boldface and underline respectively.}
\label{table:Table8}
\centering
\begin{tabular}{lcccccc}
\toprule
\textbf{Algorithm}              & \textbf{VLCS}             & \textbf{PACS}             & \textbf{OfficeHome}    & \textbf{DomainNet}             & \textbf{Avg}               \\
\midrule
ERM                              & 77.6             & 86.7           & 66.4    &41.3                  & 68.0                  \\
IRM                                 & 76.9             & 84.5           & 63.0    &28.0                 &  63.1                    \\
GroupDRO                           & 77.4             & \underline{87.1}           & 66.2        &33.4             & 66.0                      \\
Mixup                            & 78.1          & 86.8            & 68.0  &39.6                   &    68.1                   \\
MLDG                          & 77.5             & 86.8           & 66.6      &41.6               & 68.1                      \\
CORAL                              & 77.7             & \underline{87.1}            & \textbf{68.4}              &41.8      & \underline{68.8}                      \\
MMD                                  & 77.9            & \textbf{87.2}             & 66.2      &23.5             & 63.7                    \\
DANN                               & \underline{79.7}              & 85.2          & 65.3  &38.3                   &     67.1                \\
CDANN                                & \textbf{79.9}             & 85.8           & 65.3     &38.5                &  67.4                  \\
MTL                                     & 77.7            & 86.7           & 66.5  &40.8                    & 67.9                      \\
SagNet                           & 77.6           & 86.4             & 67.5      &40.8              &    68.1                  \\
ARM                                & 77.8            & 85.8            & 64.8 &36.0                   &  66.1                   \\
VREx                                 & 78.1            & \textbf{87.2 }         & 65.7 &30.1                      & 65.3                     \\
RSC                           & 77.8             & 86.2            & 66.5       &38.9             &   67.4                  \\

 AND-mask                      & 76.4            & 86.4            & 66.1                        &37.9   &  66.7                 \\

 SAND-mask                     & 76.2            & 85.9             & 65.9     &32.2                  &   65.1                \\

 Fish                       & 77.8           &85.8             & 66.0     & \underline{42.7}                  &    68.1              \\
 Fishr                       & 78.2           &86.9             & \underline{68.2}    &\textbf{43.4}                     &    \textbf{69.2}           \\
 \midrule
 HGP                             & 77.3           & 86.5            & 67.4     & 41.2           & 68.1
               \\
 Hutchinson                     & 77.9          & 86.3            & \textbf{68.4}    &41.9            & 68.6                \\
\bottomrule
\end{tabular}
\end{table*}

\end{document}